\documentclass{article}

\usepackage[main, final]{neurips_2026}
\usepackage[utf8]{inputenc}
\usepackage[T1]{fontenc}
\usepackage{hyperref}
\usepackage{url}
\usepackage{booktabs}
\usepackage{amsfonts}
\usepackage{microtype}
\usepackage{xcolor}

\usepackage{amsmath}
\usepackage{amssymb}
\usepackage{amsthm}

\usepackage{graphicx}
\usepackage{subcaption}
\usepackage{caption}
\usepackage{multirow}
\usepackage{enumitem}

\newtheorem{mytheorem}{Theorem}
\newtheorem{mylemma}{Lemma}
\newtheorem{myproposition}{Proposition}
\newtheorem{mycorollary}{Corollary}

\DeclareMathOperator*{\argmax}{argmax}
\title{Geometric Inductive Biases for Semi-Supervised Equalization: The Constellation-Aware Transformer}

\author{
  Avi Caciularu \\[5pt]
  \normalfont Google Research
}

\begin{document}

\maketitle

\begin{abstract}
Decoding signals over unknown channels with minimal pilot overhead is a critical challenge in next-generation communications. Existing deep learning approaches typically rely on generic encoders that struggle to model long-range temporal dependencies or efficiently capture the channel's physical properties from scarce data. We argue that standard architectures suffer from \textit{agnostic estimation gaps}, as they must implicitly learn the constellation geometry that is already known. We introduce the \textit{Constellation-Aware Transformer} (CAT), a novel architecture that explicitly injects geometric inductive biases into the equalization process. CAT is composed of a stack of custom \textit{TransFIRmer} blocks, which use an ``early interaction'' paradigm to co-process received signals and ideal constellation symbols. Each block features a split Feed-Forward Network that applies a Finite Impulse Response (FIR)-inspired filter for deconvolution and a parallel MLP for geometric refinement. We show that this design is structurally aligned with the optimal linear (MIMO Wiener) receiver: its attention can implement a matched-filter bank, and its bidirectional FIR branch provides the non-causal filtering that block MMSE equalization requires. In the semi-supervised setting, CAT needs fewer pilots than VAE and standard Transformer baselines: on two of our three ISI channels, it reaches a lower SER with 64 pilots than they do with 128.
\end{abstract}

\section{Introduction}

Deep learning methods for communications over unknown channels have attracted considerable interest, promising to replace rigid, model-based algorithms with flexible, data-driven solutions \citep{8054694,8437530,8242643,8267032,shlezinger2020viterbinet,9252949}. A central challenge is to minimize the pilot data required for reliable decoding, as this overhead directly reduces spectral efficiency \citep{9242305}.

While classical methods like Expectation-Maximization (EM) offer a path to unsupervised equalization \citep{720247,em}, their performance is limited by initialization sensitivity and local minima. Deep generative models, specifically Variational Autoencoders (VAEs) \citep{vae}, have been adapted to a semi-supervised learning (SSL) framework \citep{kingma2014semi}, demonstrating significant pilot reduction \citep{caciularu1,caciularu2,burshtein1,burshtein2}. However, the encoders in these systems, typically MLPs or CNNs, may not fully capture complex temporal dependencies, particularly under severe Inter-Symbol Interference (ISI).

The Transformer architecture \citep{NIPS2017_3f5ee243} motivates application to this domain, yet a direct adaptation is often suboptimal \citep{choukroun2024a}. A generic Transformer has to learn the underlying physics, such as the constellation geometry and the channel's filtering, entirely from data, even though communication theory already provides much of it.

We instead build this knowledge into the Transformer's internal components. In NLP, processing a query and a document jointly from the first layer (\textit{early and deep interaction}) is more accurate than encoding them separately \citep{Humeau2019PolyencodersAAA,caciularu-etal-2021-cdlm-cross,fang-etal-2020-hierarchical}. We do the same with the received signal and the ideal constellation symbols.

We introduce the \textit{Constellation-Aware Transformer (CAT)}, composed of novel \textit{TransFIRmer} blocks (Figure~\ref{fig:cat}). Each block changes the standard Transformer encoder in two ways:
\begin{enumerate}[leftmargin=1.5em]
    \item \textbf{Constellation-Aware Attention:} Co-processes received signals and ideal constellation symbols. The model is given the exact constellation, and its attention can act as a matched-filter bank.
    \item \textbf{TransFIRmer FFN:} Replaces the standard MLP with two streams: a bidirectional FIR-inspired filter on the signal tokens for deconvolution, and a parallel MLP on the constellation tokens.
\end{enumerate}

The analysis in Section~\ref{sec:theory} motivates these two choices. It shows that an encoder given the constellation can do no worse than one without it, that CAT's attention can act as a matched-filter bank, and that its bidirectional FIR branch supplies the non-causal filtering needed for block MMSE equalization. These are statements about capacity, not about what training finds. In our experiments, CAT outperforms VAE and standard Transformer baselines on a nonlinear memoryless channel and, once at least 32 pilots are available, on three synthetic ISI channels. The advantage holds on the 3GPP TDL-A, TDL-C and TDL-D profiles and carries over to coded transmission with a 5G NR LDPC code.

\begin{figure}[t]
    \centering
    \includegraphics[width=\linewidth]{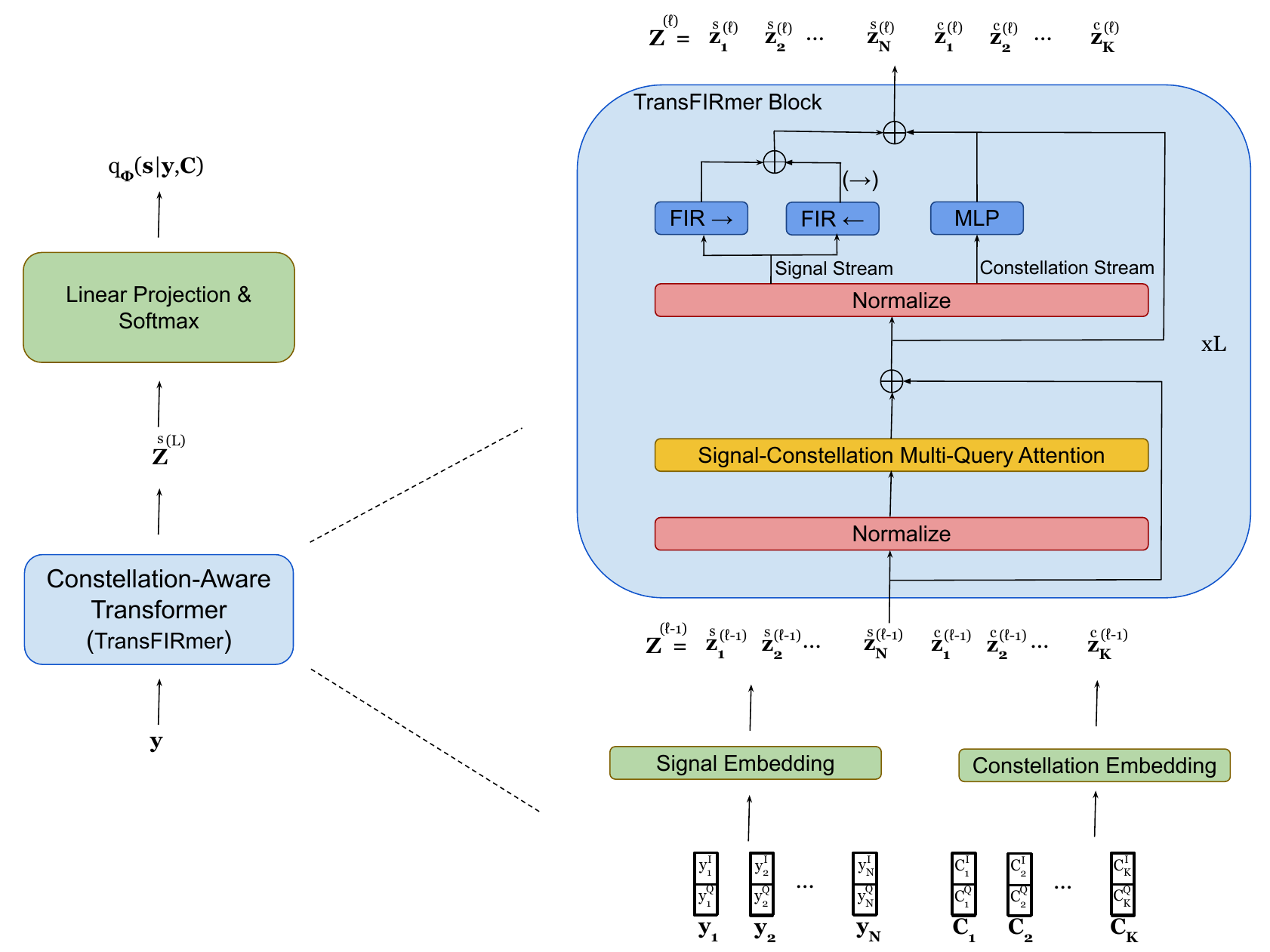}
    \caption{The architecture of our Constellation-Aware Transformer (CAT). The model processes the received signals $\mathbf{y}$ and ideal constellation symbols $\mathcal{C}$ through a stack of TransFIRmer blocks, enabling deep interaction between signal and geometry.}
    \label{fig:cat}
\end{figure}

\section{Problem Formulation and Setup}
\label{sec:problem_setup}

We consider a block of $N$ symbols, $(s_1, \ldots, s_N)$, transmitted over an unknown channel. Each symbol $s_i \in \{1, \dots, K\}$ is drawn independently and uniformly and mapped to a signal from a fixed constellation $\mathcal{C}$ of size $K$. We represent these signals as real-valued vectors $\mathbf{x}_i = (x_i^I, x_i^Q) \in \mathbb{R}^2$. The first $N_p \ll N$ symbols are known pilots and the rest form the unknown payload.

\subsection{Channel Models}

\subsubsection{Memoryless Channels}
In a memoryless channel, the received signal depends only on the current transmitted signal. We model the transformation as an unknown function $h(\cdot)$, typically including I/Q imbalance and fading. The received signal is $\mathbf{y}_i = h(\mathbf{x}_i) + \mathbf{n}_i$, where $\mathbf{y}_i = (y_i^I, y_i^Q) \in \mathbb{R}^2$.

\subsubsection{Channels with Finite Memory}
In channels with finite memory, the received signal $\mathbf{y}_i$ is affected by ISI. Following \citet{burshtein2}, we define this as a memoryless nonlinearity $g(\cdot)$ followed by a linear channel impulse response:
\begin{equation}
    \mathbf{y}_i = \sum_{l=0}^{L-1} \mathbf{h}_l g(\mathbf{x}_{i-l}) + \mathbf{n}_i,
    \label{eq:isi_model}
\end{equation}
where the complex-valued filter taps $\{\mathbf{h}_l\}_{l=0}^{L-1}$ and the nonlinearity $g(\cdot)$ are unknown to the receiver.

\subsection{The Semi-Supervised Variational Framework} \label{sec:ssl_framework}
To reduce the dependency on pilots, we adopt a semi-supervised learning (SSL) framework. The VAE-based approach \citep{caciularu1,caciularu2,burshtein1,burshtein2} involves an \textbf{encoder} $q_{\mathbf{\phi}}(s|\mathbf{y})$, approximating $p(s|\mathbf{y})$, and a \textbf{decoder} $p_{\mathbf{\theta}}(\mathbf{y}|s)$, modeling the forward channel. They are trained jointly by minimizing:
\begin{equation}
\begin{split}
\mathcal{L}_{\text{SSL}} &=
-\frac{\alpha}{N_{p}} \sum_{i=1}^{N_{p}} \log q_{\mathbf{\phi}}(s_i|\mathbf{y}_i)
- \frac{\gamma}{N_{p}} \sum_{i=1}^{N_{p}} \log p_{\mathbf{\theta}}(\mathbf{y}_i|s_i) \quad + \\ &\frac{1-\gamma}{N-N_{p}} \sum_{i=N_{p}+1}^{N} \bigg[-\mathbb{E}_{q_{\mathbf{\phi}}(s|\mathbf{y}_i )}[\log p_{\mathbf{\theta}}(\mathbf{y}_i|s)] + D_{KL}(q_{\mathbf{\phi}}(s|\mathbf{y}_i ) \| p(s)) \bigg],
\end{split}
\label{eq:ssl_vae}
\end{equation}
where $\alpha$ and $\gamma$ are weighting hyperparameters. CAT replaces the encoder $q_{\mathbf{\phi}}(s|\mathbf{y})$.\footnote{For ISI channels, the encoder conditions on the entire sequence $\mathbf{y}$, i.e., $q_{\mathbf{\phi}}(s_i|\mathbf{y})$. On the memoryless channel, CAT also attends over the whole block, because the fading coefficient and the I/Q imbalance are the same for all its symbols.}

\section{Proposed Method: The Constellation-Aware Transformer}
\label{sec:method}

We first motivate the design of CAT theoretically and then describe its architecture.

\subsection{Theoretical Analysis}
\label{sec:theory}

Standard equalizers approximate the inverse channel function purely from data. We first explain why ignoring the known constellation is a disadvantage, and then show that our architecture is structurally aligned with the optimal linear receiver. These results concern representational capacity and population risk; whether training realizes them is tested in Section~\ref{sec:experiments}.

\subsubsection{The Cost of Agnosticism: The Estimator Gap}
Let $s$ be a symbol from constellation $\mathcal{C}$, and let $\hat{s}^*(\mathbf{y}) = \mathbb{E}[s | \mathbf{y}, \mathcal{C}^*]$ be the optimal MMSE estimator conditioned on the true constellation $\mathcal{C}^*$. A domain-agnostic model must implicitly learn a posterior approximation $q_\phi(\mathcal{C}|\mathbf{y})$ from scarce pilot data. We quantify the penalty via the \textit{Excess Bayesian Risk}. In practice the receiver knows the constellation. The lemma concerns encoders that do not take it as input. Such an encoder has to represent the geometry implicitly in its weights, and $q_\phi(\mathcal{C}|\mathbf{y})$ is an idealized description of the uncertainty that remains, not a component of any baseline.

\begin{mylemma}[The Agnostic Estimator Gap]
\label{lemma:estimator_gap}
Assume the conditional expectation function $m(\mathcal{C}) = \mathbb{E}[s | \mathbf{y}, \mathcal{C}]$ is bounded by a constant $R$. The expected excess MSE of a constellation-agnostic estimator $\hat{s}_\phi$ relative to the optimal estimator $\hat{s}^*$ is upper-bounded by:
\begin{equation}
\label{eq:estimator_gap}
    \mathcal{R}(\hat{s}_\phi) - \mathcal{R}(\hat{s}^*) \leq 2R^2 \mathbb{E}_{\mathbf{y}} \left[ D_{KL}( \delta_{\mathcal{C}^*} || q_\phi(\mathcal{C}|\mathbf{y}) ) \right].
\end{equation}
\end{mylemma}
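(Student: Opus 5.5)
The plan is to model the agnostic estimator as a mixture, $\hat{s}_\phi(\mathbf{y}) = \sum_{\mathcal{C}} q_\phi(\mathcal{C}|\mathbf{y})\, m(\mathcal{C})$, i.e.\ the posterior-averaged conditional mean. This is the natural idealization implied by the discussion preceding Lemma~\ref{lemma:estimator_gap}. Here $m(\mathcal{C})$ is shorthand for $\mathbb{E}[s|\mathbf{y},\mathcal{C}]$ evaluated at the observed $\mathbf{y}$. The proof then proceeds in three steps: an orthogonality decomposition of the risk, a total-variation bound on the estimator discrepancy, and Pinsker's inequality.

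\textbf{Step 1 (orthogonality).} Since $\hat{s}^*(\mathbf{y}) = m(\mathcal{C}^*)$ is the conditional mean given $(\mathbf{y},\mathcal{C}^*)$, and $\hat{s}_\phi$ is a function of $\mathbf{y}$ alone, the cross term vanishes. This gives
\begin{equation*}
\mathcal{R}(\hat{s}_\phi) - \mathcal{R}(\hat{s}^*) = \mathbb{E}_{\mathbf{y}}\big[\|\hat{s}_\phi(\mathbf{y}) - m(\mathcal{C}^*)\|^2\big].
\end{equation*}

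\textbf{Step 2 (total-variation bound).} Writing $\delta_{\mathcal{C}^*}$ for the point mass, the pointwise discrepancy is
\begin{equation*}
\hat{s}_\phi - m(\mathcal{C}^*) = \sum_{\mathcal{C}} \big(q_\phi(\mathcal{C}|\mathbf{y}) - \delta_{\mathcal{C}^*}(\mathcal{C})\big)\, m(\mathcal{C}).
\end{equation*}
Because the coefficients sum to zero, $m(\mathcal{C})$ can be recentered by any constant. With $\|m\|\le R$, this yields $\|\hat{s}_\phi - m(\mathcal{C}^*)\| \le 2R\,\mathrm{TV}(\delta_{\mathcal{C}^*}, q_\phi)$, and the recentering may even improve the constant.

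\textbf{Step 3 (Pinsker).} Squaring Step 2 gives $\|\cdot\|^2 \le 4R^2\,\mathrm{TV}^2$. Pinsker's inequality, $\mathrm{TV}^2 \le \tfrac12 D_{KL}(\delta_{\mathcal{C}^*}\|q_\phi)$, then turns this into $\|\cdot\|^2 \le 2R^2 D_{KL}$. Taking $\mathbb{E}_{\mathbf{y}}$ yields \eqref{eq:estimator_gap}. Two sanity checks support the bound. First, $D_{KL}(\delta_{\mathcal{C}^*}\|q) = -\log q(\mathcal{C}^*|\mathbf{y})$, so the right-hand side is finite exactly when $q$ puts mass on the truth. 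Second, the bound is trivially true when $q(\mathcal{C}^*|\mathbf{y})$ is small, because the left-hand side is at most $4R^2$.

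The main obstacle is Step 1: justifying that $\hat{s}_\phi$ has this mixture form, or at least that the excess risk reduces to the squared discrepancy. For a general agnostic estimator the inequality cannot hold, since an arbitrary bad $\hat{s}_\phi$ is unrelated to $q_\phi$. I would therefore state explicitly that $\hat{s}_\phi$ is defined as the $q_\phi$-Bayes estimator. This is the reading the text preceding Lemma~\ref{lemma:estimator_gap} signals by calling $q_\phi$ an idealized description of residual uncertainty. Under that definition, Steps 2 and 3 are routine.
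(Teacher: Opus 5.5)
Your proposal is correct and matches the paper's proof: the paper likewise defines $\hat{s}_\phi$ as the $q_\phi$-mixture of the conditional means $m(\mathcal{C})$, bounds $\|\hat{s}_\phi - \hat{s}^*\|$ by $2R\,\delta_{TV}$ via the triangle inequality, applies Pinsker's inequality, and squares. The only difference is your Step 1, the orthogonality identity that equates the excess risk with $\mathbb{E}_{\mathbf{y}}\|\hat{s}_\phi - \hat{s}^*\|^2$, which the paper leaves implicit in ``squaring both sides yields the final bound on the MSE''; your version states it and justifies it.
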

\begin{proof}[Proof Sketch]
We bound $\|\hat{s}_\phi - \hat{s}^*\|$ via the total variation distance between $q_\phi$ and $\delta_{\mathcal{C}^*}$, then apply Pinsker's inequality. Full derivation in Appendix~\ref{app:proofs}.
\end{proof}

\begin{mycorollary}[Finite-Sample Pilot Efficiency, informal]
\label{cor:pilot_efficiency}
Under PAC-Bayes assumptions, for a model with $M$ parameters trained on $N_p$ pilots, $\mathbb{E}[D_{KL}(\delta_{\mathcal{C}^*} || q_\phi)] = \mathcal{O}(M / N_p)$. Combined with Lemma~\ref{lemma:estimator_gap}, the excess risk of a constellation-agnostic estimator decays as $\mathcal{O}(M / N_p)$.
\end{mycorollary}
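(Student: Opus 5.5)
The plan has two ingredients: a PAC-Bayes control of the KL term, then substitution into Lemma~\ref{lemma:estimator_gap}. Before starting, I would fix the reading of the informal claim. Since $\delta_{\mathcal{C}^*}$ is a point mass, $D_{KL}(\delta_{\mathcal{C}^*}\|q_\phi(\cdot|\mathbf{y})) = -\log q_\phi(\mathcal{C}^*|\mathbf{y})$. The quantity to bound is therefore the expected log-loss that the agnostic encoder implicitly incurs in ``identifying'' the true geometry. This is a population cross-entropy whose minimal achievable value is $0$ (realizability), because the true constellation is deterministic.

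\textbf{Step 1: PAC-Bayes bound on the log-loss.} I would treat the log-loss $\ell(\phi;\mathbf{y}) = -\log q_\phi(\mathcal{C}^*|\mathbf{y})$ as the loss of a model with $M$ parameters trained on $N_p$ i.i.d.\ pilot observations. The steps are:
\begin{itemize}[leftmargin=1.5em]
\item Assume $\ell$ is bounded, e.g.\ by clipping $q_\phi \ge \epsilon$ so that $\ell \le \log(1/\epsilon)$.
\item Take a Gaussian prior over parameters and a posterior concentrated around the trained weights, with width chosen so that the empirical loss increases by at most $O(M/N_p)$ (a Lipschitz assumption in $\phi$).
\item Note that the complexity term then satisfies $\mathrm{KL}(\rho\|\pi)=O(M\log N_p)$, or $O(M)$ under bounded-norm assumptions absorbed into the constant.
\item Use the realizable, fast-rate form of PAC-Bayes (Catoni, or the ``kl-inverse'' / Bernstein version) rather than the $\sqrt{\cdot}$ form. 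This gives
\[
\mathbb{E}[\ell] \le C\Big(\widehat{\mathbb{E}}[\ell] + \tfrac{M + \log(1/\delta)}{N_p}\Big).
\]
\item Invoke realizability: the training minimizer attains near-zero empirical loss on the pilots, since the geometry is deterministic and representable.
\end{itemize}
Integrating the high-probability bound over $\delta$ then yields $\mathbb{E}[D_{KL}(\delta_{\mathcal{C}^*}\|q_\phi)] = O(M/N_p)$, with logarithmic factors suppressed, as the word ``informal'' permits.

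\textbf{Step 2: Substitution.} I would plug the Step 1 bound into \eqref{eq:estimator_gap}. This gives excess risk at most $2R^2\cdot O(M/N_p) = O(M/N_p)$, where the constant $R$ is absorbed into the big-$O$.

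\textbf{Main obstacle.} The main difficulty is obtaining the fast $1/N_p$ rate rather than $\sqrt{M/N_p}$. This needs two things: boundedness of the log-loss, and the realizability / low-noise condition. I would first try Catoni's bound with zero empirical risk. If that proves too delicate, I would fall back on a Bernstein condition, which holds because the log-loss has variance controlled by its mean when the target is a point mass. I would state these as the ``PAC-Bayes assumptions'' in the corollary.
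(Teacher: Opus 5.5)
The gap is in Step 1, at ``invoke realizability''. That the geometry is representable only gives $\inf_\phi \widehat{\mathbb{E}}[\ell]\approx 0$ for $\ell(\phi;\mathbf{y})=-\log q_\phi(\mathcal{C}^*|\mathbf{y})$. A realizable (Catoni or Bernstein) PAC-Bayes bound controls the population loss of a posterior concentrated at an approximate minimizer of that \emph{same} empirical loss. The encoder's weights, however, minimize Eq.~(\ref{eq:ssl_vae}), a symbol-level objective in which $q_\phi(\mathcal{C}|\mathbf{y})$ never appears; the paper stresses that it is an idealized description, not a component of any baseline. So nothing makes $-\log q_\phi(\mathcal{C}^*|\mathbf{y}_i)$ small on the pilots.

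You cannot import that smallness from the pilot cross-entropy either. Write the implicit model as $q_\phi(s|\mathbf{y})=\sum_{\mathcal{C}}q_\phi(\mathcal{C}|\mathbf{y})\,p(s|\mathbf{y},\mathcal{C})$. The chain rule and data processing then give $D_{KL}(p(s|\mathbf{y},\mathcal{C}^*)\,\|\,q_\phi(s|\mathbf{y}))\le D_{KL}(\delta_{\mathcal{C}^*}\|q_\phi(\cdot|\mathbf{y}))$, which points the wrong way. For example, if the candidate set contains a slightly perturbed copy of $\mathcal{C}^*$, then at high SNR both give the same near one-hot symbol posterior. Zero symbol loss is then compatible with a constellation KL of $\log 2$.

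Training on the constellation log-loss directly does not help. Every pilot carries the same label $\mathcal{C}^*$, so a constant output already has zero population loss, and your $\mathcal{O}(M\log N_p/N_p)$ bound says nothing about learning the geometry from pilots. Your Bernstein fallback depends on the same premise. Here the Bernstein condition holds only because the best-in-class loss is essentially zero ($\ell^2\le B\ell$ for $\ell\in[0,B]$), and the bound still charges the empirical loss of the trained weights.

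So Step 1 gives the rate only on the condition that the trained weights already put near-unit mass on $\mathcal{C}^*$ at every pilot. That condition is the substantive part, and nothing in the setup supplies it. The paper does not derive the rate either. Its argument for the corollary is exactly your Step 2: it takes $\mathbb{E}[D_{KL}(\delta_{\mathcal{C}^*}\|q_\phi)]=\mathcal{O}(M/N_p)$ as the content of the ``PAC-Bayes assumptions'' and multiplies by $2R^2$ from Lemma~\ref{lemma:estimator_gap}. Its only support is the heuristic remark after Theorem~\ref{thm:hypothesis_space}: locating the $K$ received centroids ($Kd$ coordinates) from $N_p$ labeled pilots costs squared error of order $Kd/N_p$ times the noise variance. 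The Limitations section itself calls this finite-sample argument heuristic.

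The honest version of your proposal states the KL rate, or near-zero empirical constellation log-loss of the trained weights, explicitly as the hypothesis; your Step 2 is then the whole proof. Your point that the KL itself must decay at the fast rate is correct and one the paper passes over: Lemma~\ref{lemma:estimator_gap} is linear in the KL, so a $\sqrt{M/N_p}$ bound would only give $\mathcal{O}(\sqrt{M/N_p})$ excess risk.
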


\noindent CAT receives $\mathcal{C}^*$ as input, so in this idealized picture its gap is zero. CAT still has to learn the channel from the pilots; what it no longer has to learn is the constellation.

\subsubsection{Hypothesis Space Superiority}
We further formalize this advantage via the available hypothesis spaces.
\begin{mytheorem}[Advantage of Parametric Hypothesis Space]
\label{thm:hypothesis_space}
Let $\mathcal{G}_{\text{std}}$ be the space of functions mapping $\mathbf{y} \to \hat{s}$, and $\mathcal{G}_{\text{CA}}$ be the space of functions mapping $(\mathbf{y}, \mathcal{C}) \to \hat{s}$. The achievable MMSE satisfies $\text{MMSE}_{\text{CA}} \leq \text{MMSE}_{\text{std}}$.
\end{mytheorem}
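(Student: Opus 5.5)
The plan is a nesting argument: every constellation-agnostic estimator is also a constellation-aware one that happens to ignore its second argument. Concretely, I will define the achievable MMSE of each class as
\[
\text{MMSE}_{\text{std}} = \inf_{g \in \mathcal{G}_{\text{std}}} \mathbb{E}\bigl[\|s - g(\mathbf{y})\|^2\bigr],
\qquad
\text{MMSE}_{\text{CA}} = \inf_{f \in \mathcal{G}_{\text{CA}}} \mathbb{E}\bigl[\|s - f(\mathbf{y},\mathcal{C})\|^2\bigr].
\]
In both cases the expectation is over the joint law of $(s,\mathbf{y})$ with $\mathcal{C}=\mathcal{C}^*$ supplied as input. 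I will take the function classes to be all measurable maps, or more generally any pair of parametric families closed under the embedding below.

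The key step is to build an injection $\iota:\mathcal{G}_{\text{std}}\to\mathcal{G}_{\text{CA}}$ by setting $\iota(g)(\mathbf{y},\mathcal{C}) := g(\mathbf{y})$. This map is measurable whenever $g$ is. For each $g$ it gives
\[
\mathbb{E}\bigl[\|s-\iota(g)(\mathbf{y},\mathcal{C}^*)\|^2\bigr]=\mathbb{E}\bigl[\|s-g(\mathbf{y})\|^2\bigr].
\]
Taking the infimum over $g$, and using $\iota(\mathcal{G}_{\text{std}})\subseteq\mathcal{G}_{\text{CA}}$, yields $\text{MMSE}_{\text{CA}}\le\text{MMSE}_{\text{std}}$.

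As a remark, I will also identify the two minimizers:
\begin{itemize}
\item the CA class attains $\mathbb{E}[s\mid\mathbf{y},\mathcal{C}^*]=\hat{s}^*$;
\item the standard class attains $\mathbb{E}[s\mid\mathbf{y}]$, which a learned agnostic model can only approximate.
\end{itemize}
Lemma~\ref{lemma:estimator_gap} then quantifies the gap whenever the constellation is uncertain for the agnostic model. The case of equality, when $\mathcal{C}$ is deterministic and already encoded, should also be noted.

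The main obstacle is the modelling step rather than the inequality. For the CAT architecture specifically, I must check that $\mathcal{G}_{\text{CA}}$ really contains the embedded agnostic functions. That is, CAT must be able to ignore the constellation tokens, for example by zeroing the attention value projections from those tokens and the constellation-MLP branch, so that the signal stream reduces to a standard encoder. I will state this closure assumption explicitly. If the theorem is read over unrestricted measurable function classes, it holds trivially with no further assumption.
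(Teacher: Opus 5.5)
Your proposal is correct and follows the paper's proof. Embedding each $g$ as $\psi(\mathbf{y},\mathcal{C})=g(\mathbf{y})$ gives $\mathcal{G}_{\text{std}}\subseteq\mathcal{G}_{\text{CA}}$, so the infimum over the larger class cannot exceed the infimum over the smaller one; your equality remark for a fixed constellation also matches the paper's. The closure condition you raise for the concrete CAT architecture is not needed for the theorem as stated over unrestricted function spaces, and you are right to keep it as an explicit assumption, because with shared attention projections and softmax normalization CAT does not automatically ignore its constellation tokens exactly.
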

\begin{proof}[Proof Sketch]
$\mathcal{G}_{\text{std}} \subseteq \mathcal{G}_{\text{CA}}$ since any $g \in \mathcal{G}_{\text{std}}$ extends to $\psi(\mathbf{y}, \mathcal{C}) = g(\mathbf{y})$. The infimum over the superset cannot exceed that over the subset. On its own, this result says nothing about finite samples; Appendix~\ref{app:proofs} gives a heuristic argument for why not having to learn the constellation should reduce the number of pilots needed.
\end{proof}

\subsubsection{Structural Alignment with Optimal Filtering}
Classical theory states that the optimal linear receiver for ISI channels consists of a \textit{Block MMSE Equalizer} followed by a \textit{Matched Filter}.\footnote{The mathematical structure is equivalent to the MIMO Wiener filter when ISI is modeled as virtual spatial multiplexing.}

\begin{myproposition}[Structural Alignment with Optimal Receiver]
\label{prop:alignment}
The CAT architecture provides inductive biases aligned with optimal receiver components:
\begin{enumerate}
    \item \textbf{Matched Filter:} The Constellation-Aware Attention has the capacity to implement a Matched Filter bank through learned projections.
    \item \textbf{Block MMSE Equalizer:} The TransFIRmer's bidirectional FIR-FFN provides the non-causal spectral bias conducive to deconvolution.
\end{enumerate}
\end{myproposition}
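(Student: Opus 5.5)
The plan is to prove each item by explicit construction: we exhibit parameter settings of the relevant CAT component that reproduce the corresponding classical operator exactly. We first fix notation. Stack the block as $\mathbf{Y}=[\mathbf{y}_1,\dots,\mathbf{y}_N]$. Under the ISI model \eqref{eq:isi_model}, write the linearised model $\mathbf{y}=\mathbf{H}\mathbf{u}+\mathbf{n}$, where $\mathbf{u}_i=g(\mathbf{x}_i)$ and $\mathbf{H}$ is the block-Toeplitz (banded) convolution matrix built from $\{\mathbf{h}_l\}$. Each complex tap is represented as a $2\times 2$ real rotation-scaling matrix. In this form the optimal linear receiver is $\mathbf{W}=\mathbf{R}_u\mathbf{H}^\top(\mathbf{H}\mathbf{R}_u\mathbf{H}^\top+\sigma^2\mathbf{I})^{-1}$ (the MIMO Wiener filter). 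It is followed by a per-symbol decision stage that scores $\hat{\mathbf{u}}_i$ against every constellation point $\mathbf{c}_k\in\mathcal{C}$, i.e.\ the matched-filter bank $\langle \hat{\mathbf{u}}_i,\mathbf{c}_k\rangle-\tfrac12\|\mathbf{c}_k\|^2$.

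\textbf{Item 1 (matched filter).} The Constellation-Aware Attention forms queries from signal tokens and keys from constellation tokens, and we exploit this directly. We choose the embedding so that each constellation token carries $(\mathbf{c}_k,\|\mathbf{c}_k\|^2)$ and each signal token carries $(\hat{\mathbf{u}}_i,1)$. We then set $W_Q$ to extract $(\hat{\mathbf{u}}_i,-\tfrac12)$ and $W_K$ to extract $(\mathbf{c}_k,\|\mathbf{c}_k\|^2)$. With these choices the pre-softmax logit equals $\langle\hat{\mathbf{u}}_i,\mathbf{c}_k\rangle-\tfrac12\|\mathbf{c}_k\|^2$ up to the $1/\sqrt{d}$ factor, which we absorb into a scale $\beta$. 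These logits are exactly the matched-filter outputs. Under Gaussian noise, the resulting softmax with temperature $\sigma^2/\beta$ equals the posterior $p(s_i=k\mid\hat{\mathbf{u}}_i)$ in the Gaussian approximation. We finally choose $W_V$ to output one-hot or constellation coordinates, so that the head returns either the soft symbol estimate or the posterior. The same construction covers the case where the matched filter is taken with respect to the channel response, namely $\langle \mathbf{y}, \mathbf{H}\mathbf{c}_k\rangle$: there we fold $\mathbf{H}$ restricted to the current tap into $W_Q$, since linear projections compose.

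\textbf{Item 2 (block MMSE).} The key observation is that $\mathbf{W}$ is not banded, but its rows decay geometrically away from the diagonal when the channel spectrum is bounded away from zero, $\lambda_{\min}(\mathbf{H}\mathbf{R}_u\mathbf{H}^\top)+\sigma^2>0$. This follows from the Demko--Moss--Smith exponential-decay bound for inverses of banded positive-definite matrices. Away from block edges, $\mathbf{W}$ is approximately Toeplitz, and its rows are the two-sided (non-causal) Wiener taps $w_{-J},\dots,w_J$. The bidirectional FIR branch computes $\sum_{j=-J}^{J}\mathbf{A}_j\mathbf{z}_{i+j}$. We set $\mathbf{A}_j=w_j$ (as $2\times2$ blocks) on the signal channel, and the truncation error then decays like $\rho^{J}$ for some $\rho<1$ determined by the condition number. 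A purely causal filter would have to set $\mathbf{A}_j=0$ for $j>0$. We show that this incurs a non-vanishing excess MSE whenever any $w_j$ with $j>0$ is nonzero, which holds generically for $L\geq 2$; this is what "non-causal spectral bias" is taken to mean.

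I expect the main obstacle to be Item 2. The FIR branch is translation-invariant, whereas the exact block MMSE operator varies near the block boundaries. There are two fixes to try. The first is zero-padding of the FIR convolution combined with a bound that confines the edge effect to $O(J)$ symbols, whose contribution to the average MSE is $O(J/N)$. The second is to add positional encodings and rely on attention to correct the edges, but I would not rely on this. A second subtlety is that the nonlinearity $g$ makes the problem only approximately linear, so I will state Item 2 for the linear part $\mathbf{u}=g(\mathbf{x})$. For that part, the MLP branch and residual connections can pass $g$-related features through unchanged.
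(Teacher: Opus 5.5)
Your proposal is sound in substance and takes a more constructive route than the paper.

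For Item~1, the paper expands the Gaussian likelihood and drops $\|\mathbf{z}\|^2$ by softmax invariance. It then \emph{assumes constant modulus} so that $\|c_k\|^2$ cancels, leaving $\mathbf{W}_Q^\top\mathbf{W}_K\approx(\sqrt{d}/\sigma^2)\mathbf{I}$; QAM is handled only by an empirical remark. Your constant query coordinate and energy key coordinate make the logit exactly $\langle\hat{\mathbf{u}}_i,\mathbf{c}_k\rangle-\tfrac12\|\mathbf{c}_k\|^2$, so 16-QAM is covered by the argument itself.

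For Item~2, the paper factors $\mathbf{W}=\mathbf{H}^H\mathbf{R}_{yy}^{-1}$. It assigns the $L$-tap anti-causal filter $\mathbf{H}^H$ to the FIR branch, and the whitening $\mathbf{R}_{yy}^{-1}$ to attention, which it only asserts can be learned. That is not obvious for a single softmax head, whose mixing weights are nonnegative multiples of one value matrix. You instead have the FIR branch approximate the entire Wiener filter, using Demko--Moss--Smith decay and interior near-Toeplitz structure. Your spectral condition is automatic, since $\mathbf{R}_{yy}$ dominates the noise covariance; the channel only sets $\rho$, through a condition number bounded uniformly in $N$. You also make ``non-causal bias'' precise. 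The one-line argument you still need is strict convexity of the MSE: any filter with $\mathbf{A}_j=0$ for $j>0$ has excess MSE at least $\lambda_{\min}(\mathbf{R}_{yy})\sum_{j>0}\|w_j\|_F^2$.

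Your route buys an explicit error bound and a necessity statement without the whitening-by-attention claim. The paper's factorization buys something too: it only asks the FIR for span $L\le 10$, which a size-12 kernel represents exactly. Your route needs $J$ beyond the decay length of $\mathbf{R}_{yy}^{-1}$. Since the kernel is fixed, $J\le 11$, so $\rho^{J}$ is a fixed error that is small only for well-conditioned $\mathbf{R}_{yy}$. State the result with that explicit error rather than as $J\to\infty$.

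Four steps of Item~1 need repair before the construction is realizable in CAT.

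(i) $\|\mathbf{c}_k\|^2$ is not affine in $\mathbf{c}_k$ on 16-QAM. An affine $f$ satisfies $f(1,1)+f(-1,-1)=f(3,3)+f(-3,-3)$, but the energies give $4\neq 36$. So no linear $\mathrm{Embed}_{\text{const}}$ can supply it, and no learned $\mathbf{W}_K$ acting on a linear embedding can absorb it exactly, as the paper's remark suggests. The fix is:
\begin{itemize}
\item take $\|\mathbf{c}_k\|^2$ from block~1's constellation MLP (on $\{\pm1,\pm3\}$, $t^2=1+4[\mathrm{ReLU}(t-1)+\mathrm{ReLU}(-t-1)]$);
\item let block~1's FIR produce $\hat{\mathbf{u}}_i$, with block~1's attention switched off ($\mathbf{W}_V=0$);
\item put the matched-filter attention in block~2.
\end{itemize}

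(ii) The softmax runs over all $N+K$ keys. Add a token-type coordinate with a large query weight to suppress the signal keys. Also scale the signal embedding up so that $\mathbf{p}_i$ does not spoil your constant coordinate.

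(iii) The LMMSE output is biased and non-isotropic. Write $\hat{\mathbf{u}}_i=\mathbf{B}\mathbf{u}_i+\mathbf{e}_i$ with $\mathbf{B}=(\mathbf{W}\mathbf{H})_{ii}$, where $\mathbf{e}_i$ is uncorrelated with $\mathbf{u}_i$ and has $\mathrm{Cov}(\mathbf{e}_i)=\boldsymbol{\Sigma}$. The Gaussian posterior then needs $\mathbf{W}_Q^\top\mathbf{W}_K\propto\boldsymbol{\Sigma}^{-1}\mathbf{B}$ and a key coordinate $\tfrac12\mathbf{c}_k^\top\mathbf{B}^\top\boldsymbol{\Sigma}^{-1}\mathbf{B}\mathbf{c}_k$, again produced by the MLP. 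Scoring against $\mathbf{c}_k$ with $\sigma^2\mathbf{I}$ is not the posterior once $\mathbf{B}\neq\mathbf{I}$.

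(iv) Folding $\mathbf{H}$ ``restricted to the current tap'' into $\mathbf{W}_Q$ works only for memoryless channels. With ISI, $\langle\mathbf{y},\mathbf{H}\mathbf{c}_k\rangle$ needs $\sum_l\mathbf{h}_l^\top\mathbf{y}_{i+l}$, which spans $L$ tokens and must come from the FIR branch, not a per-token projection.
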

\begin{proof}[Proof Sketch]
For (1), expanding the Gaussian posterior shows detection reduces to cross-correlations $\mathbf{z}^T c_k$, which the attention mechanism computes. For (2), the optimal equalizer $\mathbf{W} = \mathbf{H}^H \mathbf{R}_{yy}^{-1}$ requires a non-causal FIR filter, provided by our bidirectional convolutions. Full proof and a discussion of non-constant modulus constellations in Appendix~\ref{app:proofs}.
\end{proof}

\noindent Proposition~\ref{prop:alignment} is about capacity; we do not claim that training converges to the Wiener solution. On $h^{(1)}$ and $h^{(2)}$, CAT's gap to the optimal decoder shrinks as $N_p$ grows (Table~\ref{tab:isi_results}).

\subsection{The CAT Architecture}
CAT is a stack of 3 layers, which we call \textit{TransFIRmer blocks}.

\subsubsection{Input Representation and Embedding}
The input consists of the sequence of $N$ received channel outputs $(\mathbf{y}_{1}, \ldots, \mathbf{y}_{N})$ and the set of $K$ ideal constellation symbols $\{\mathbf{x}(1), \ldots, \mathbf{x}(K)\}$. Both are projected into dimension $d_{\text{model}}$. We add fixed sinusoidal positional embeddings $\mathbf{p}_i$ to the signals, but not to the set-based constellation symbols:
\begin{align*}
    \mathbf{z}^{\text{sig}}_i = \text{Embed}_{\text{sig}}(\mathbf{y}_i) + \mathbf{p}_i, \quad i \in \{1, \ldots, N\},\\
    \mathbf{z}^{\text{const}}_k = \text{Embed}_{\text{const}}(\mathbf{x}(k)), \quad k \in \{1, \ldots, K\}.
\end{align*}

\subsubsection{The TransFIRmer Block}
The TransFIRmer block modifies both Transformer sub-layers to align with Proposition~\ref{prop:alignment}.

\paragraph{Signal-Constellation Attention Mechanism.}
This stage enables deep interaction via joint self-attention over the concatenated $N+K$ tokens ($\mathbf{Q}$, $\mathbf{K}$, $\mathbf{V}$ computed from the full sequence via multi-query attention). We use a full (bidirectional) attention mask, since equalization is a \textit{smoothing} problem.

\paragraph{Two-Stream Feed-Forward Network.}
The second stage replaces the MLP with two streams.
\begin{itemize}[leftmargin=1.5em]
    \item \textbf{Signal Stream (Bidirectional FIR-Inspired Filter):} The signal tokens $\mathbf{Z}^{(l)}_{\text{sig}}$ are processed by a pair of 1D convolutional layers implementing a bidirectional filter:
    \begin{equation}
    \begin{split}
        \text{FFN}_{\text{sig}}(\mathbf{Z}_{\text{sig}}) = \text{Conv}_{\text{fwd}}(\mathbf{Z}_{\text{sig}})
         + \text{Flip}\left(\text{Conv}_{\text{bwd}}\left(\text{Flip}\left(\mathbf{Z}_{\text{sig}}\right)\right)\right).
    \end{split}
    \label{eq:firffn}
    \end{equation}
    This sub-layer can represent the channel-matched filter $\mathbf{H}^H$ of the optimal MMSE equalizer (Proposition~\ref{prop:alignment}). Since the signals are in $\mathbb{R}^2$, it can also learn the complex conjugation in $\mathbf{H}^H$.

    \item \textbf{Constellation Stream (MLP):} A standard MLP refines the constellation tokens $\mathbf{Z}^{(l)}_{\text{const}}$.
\end{itemize}
Each stream has its own residual connection.

\subsection{Symmetry Breaking and Inductive Bias}
A limitation of standard Transformers is their permutation equivariance: they lack an intrinsic ``anchor'' for the signal space. In 16-QAM, the difference between symbols is geometric and absolute, not relative. CAT breaks this symmetry by feeding fixed constellation tokens $\mathcal{C}$ into attention, providing a rigid coordinate system against which received signals are measured. The attention scores $A_{ij} \propto \mathbf{y}_i^T \mathbf{c}_j$ perform soft-decision mapping to symbol centroids, transforming the problem from ``learning clustering from scratch'' to ``registering points to a known template''.

\subsection{Computational Complexity Analysis}
\label{sec:complexity}
The complexity of standard self-attention over $N$ tokens is $\mathcal{O}(N^2 d)$. In CAT, we process $N+K$ tokens, giving $\mathcal{O}((N+K)^2 d)$. The attention cost therefore grows by a factor of $(1+K/N)^2$. For 16-QAM ($K=16$), the overhead is about 56\% at $N=64$, 13\% at $N=256$ and 6\% at $N=512$. The pilot savings reduce transmission overhead, not this extra computation at the receiver.

\paragraph{Scalability to High-Order Modulation.} Table~\ref{tab:scalability} shows how CAT scales with the constellation order on $h^{(1)}$. Its SER gain over the vanilla Transformer shrinks from 3.8\,dB for 16-QAM to 3.0\,dB for 64-QAM and 2.2\,dB for 256-QAM, while the attention overhead grows to about 56\% for 64-QAM and 300\% for 256-QAM. For 64-QAM the pilot savings translate into a 15\% gain in spectral efficiency. Appendix~\ref{app:scalability} discusses higher orders and compares parameter counts.

\begin{table}[t]
\centering
\caption{Scalability across constellation orders ($h^{(1)}$ channel, $N\!=\!256$, SNR 16--24~dB, gains at target SER $10^{-2}$).}
\label{tab:scalability}
\begin{tabular}{@{}lccc@{}}
\toprule
\textbf{Modulation} & \textbf{$K$} & \textbf{Attn. Overhead} & \textbf{SER Gain vs. Vanilla Trans.} \\
\midrule
16-QAM & 16 & $\sim$13\% & 3.8 dB \\
64-QAM & 64 & $\sim$56\% & 3.0 dB \\
256-QAM & 256 & $\sim$300\% & 2.2 dB \\
\bottomrule
\end{tabular}
\end{table}

\section{Experiments}
\label{sec:experiments}
We evaluate CAT as a semi-supervised channel equalizer. The experiments ask whether its two inductive biases, the constellation prior with early interaction and the TransFIRmer block, allow sample-efficient adaptation to channel impairments without large offline training sets. We report results on a nonlinear memoryless channel and on three ISI channels, followed by an ablation. Section~\ref{sec:discussion} adds results with channel coding and on 3GPP TDL channels.

\subsection{Experimental Setup}
\label{sec:experiments_setup}

To evaluate CAT, we test it on two categories of channels: a nonlinear memoryless channel with I/Q imbalance and Rayleigh fading, and three standard channels with finite memory (ISI). Unless stated otherwise, experiments use 16-QAM.

Our CAT model consists of a 3-layer stack of TransFIRmer blocks with Multi-Query Attention \citep{shazeer2019fast} and fixed sinusoidal positional embeddings. CAT achieves its performance with approximately $10$k parameters, comparable to the vanilla Transformer ($9$k) and far fewer than VAE-CNN ($150$k; see Appendix~\ref{app:scalability}). Other hyperparameters are consistent with \cite{burshtein2} (see Appendix~\ref{app:hyperparams}).

We compare against the optimal decoder (ML/BCJR) \citep{bcjr}, SSL Monte Carlo EM (MCEM) \citep{MCEM}, SSL Viterbi EM \citep{em}, Simple Decision Directed (SDD), VAE-CNN \citep{burshtein2}, and the meta-learning algorithm CAVIA \citep{cavia}.\footnote{CAVIA requires meta-training on prior channel realizations, giving it access to distributional knowledge unavailable to other methods under strict test-time training constraints. We include it as an oracle-assisted baseline.} We also include a vanilla Transformer baseline \citep{kunde2025transformers} to isolate the benefits of our architectural modifications.
We report the mean Symbol Error Rate (SER) over 1000 independent Monte Carlo trials; the 95\% confidence intervals are negligible.

\paragraph{SNR Range and Training Protocol.} Most experiments use moderate-to-high SNR (17--22\,dB): this range is where the semi-supervised challenge is most relevant, it matches practical systems (e.g., 5G NR targets 15--25\,dB), and it allows direct comparison with prior work \citep{burshtein1,burshtein2}. Table~\ref{tab:snr_sweep} extends the range to 14--26\,dB on $h^{(1)}$. Each block experiences a single channel realization. CAT, the vanilla Transformer and VAE-CNN are trained from scratch on every block for 5,000 steps with Eq.~(\ref{eq:ssl_vae}), using only that block's pilots and its unlabeled payload, and SER is measured on the same payload. No payload labels and no data from other channel realizations are used, except by the CAVIA baseline and in the meta-initialized runs of Section~\ref{sec:discussion}. The model is therefore never tested on a channel other than the one it has just adapted to; when the channel changes, it adapts again. Residual carrier frequency offset (CFO) and phase noise within a block are covered in Appendix~\ref{app:cfo}, and other channel types by the 3GPP TDL profiles (Section~\ref{sec:3gpp_eval}).

\subsection{Results on Memoryless Channels}
Figure~\ref{fig:main_results_memoryless} shows the SER as a function of the number of payload symbols at 18\,dB and 22\,dB. CAT has a lower SER than the VAE-CNN at every payload size, and the gap is largest for the shortest payload (64 symbols). As payload size increases, CAT closely approaches the Optimal decoder, suggesting that the ``Early Interaction'' prior effectively substitutes for large training sets.

\begin{figure}[t]
    \centering
    \begin{subfigure}[t]{0.49\linewidth}
        \includegraphics[width=\textwidth]{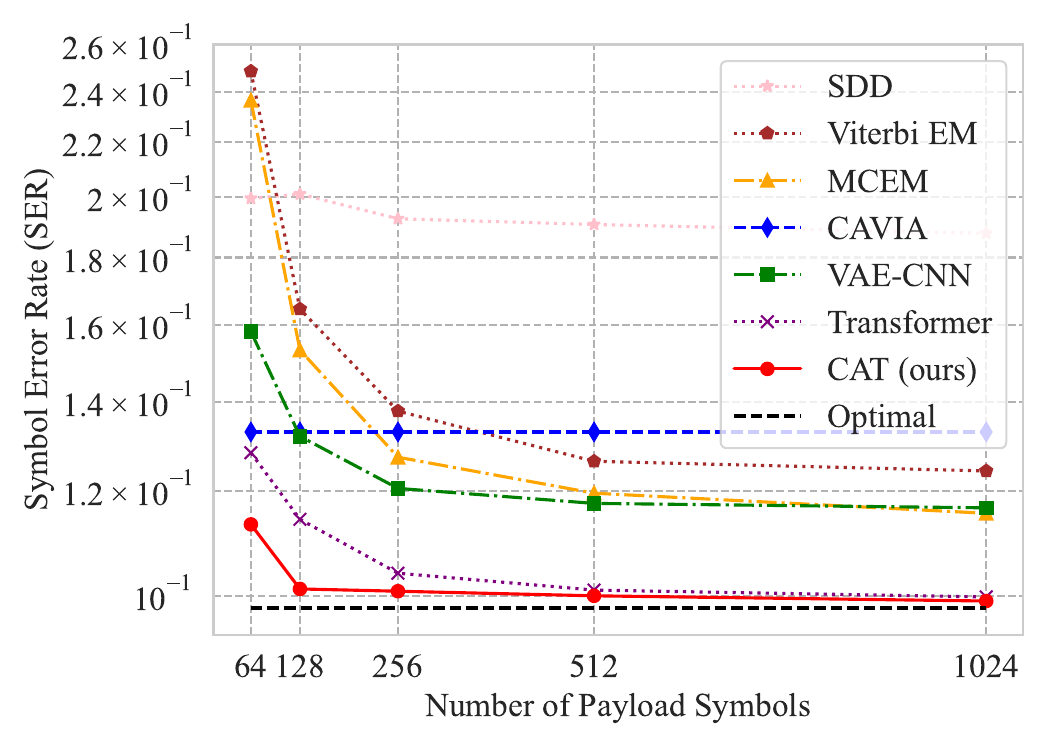}
        \caption{SER results for SNR $=18$\,dB.}
        \label{fig:ser_vs_n_18db}
    \end{subfigure}
    \hfill
    \begin{subfigure}[t]{0.48\linewidth}
        \includegraphics[width=\textwidth]{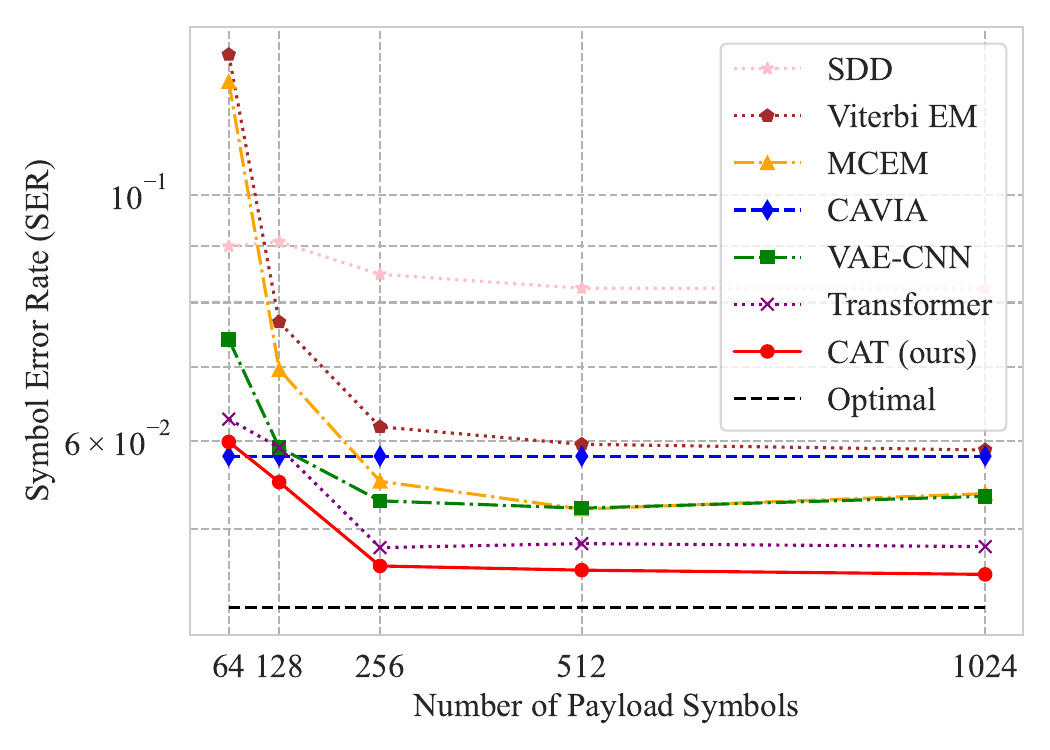}
        \caption{SER results for SNR $=22$\,dB.}
        \label{fig:ser_vs_n_22db}
    \end{subfigure}
    \caption{Symbol Error Rate (SER) on a memoryless nonlinear channel as a function of the number of payload symbols, with $N_p=16$ pilots. Apart from the optimal decoder, only the meta-trained CAVIA baseline has a lower SER than CAT, and only at 64 payload symbols and 22\,dB.}
    \label{fig:main_results_memoryless}
\end{figure}

\subsection{Results on Channels with Memory (ISI)}
We evaluate CAT on three standard ISI channels (see Appendix~\ref{sec:channs_with_mem}) with different lengths ($L\!=\!5, 4, 10$). Table~\ref{tab:isi_results} summarizes the results. With $N_p=16$, one pilot per constellation point on average, all three learned equalizers do poorly (SER $0.29$--$0.65$), and on $h^{(1)}$ CAT trails both baselines. From $N_p=32$ on, CAT is the best of the three on every channel. For $N_p\ge 64$ it reduces SER relative to the stronger baseline by $1.9$--$3.4\times$ on $h^{(1)}$ and $h^{(2)}$ and by $1.2$--$1.4\times$ on the 10-tap $h^{(3)}$, where BCJR is intractable. On $h^{(1)}$ and $h^{(2)}$, CAT with 64 pilots also has a lower SER than either baseline with 128, and with 128 pilots it comes within a factor of $1.3$ and $2.5$ of the BCJR optimum, compared with $2.4$ and $8.6$ for the stronger baseline. The ablation in Table~\ref{tab:ablation} attributes most of the gain on $h^{(1)}$ to the bidirectional FIR-FFN and the constellation prior. Table~\ref{tab:snr_sweep} repeats the $h^{(1)}$ comparison from 14 to 26\,dB: CAT has the lowest SER at every SNR, and its margin over the stronger baseline grows from $1.5\times$ at 14\,dB to $3.9\times$ at 26\,dB.

\begin{table}[t]
\centering
\caption{SER on channels with memory ($E_x/N_0 = 17$\,dB, payload 256). CAT vs.\ VAE-CNN and Vanilla Transformer.}
\label{tab:isi_results}
\begin{tabular}{@{}llcccc@{}}
\toprule
\textbf{Channel} & \textbf{Pilots} & \textbf{VAE-CNN} & \textbf{Trans.} & \textbf{CAT (Ours)} & \textbf{Optimal} \\
\midrule
\multirow{4}{*}{$h^{(1)}$ (L=5)}
& 16 & \textbf{0.2900} & 0.3392 & 0.3580 & \multirow{4}{*}{0.0121} \\
& 32 & 0.1251 & 0.1192 & \textbf{0.0842} & \\
& 64 & 0.0523 & 0.0610 & \textbf{0.0198} & \\
& 128 & 0.0494 & 0.0290 & \textbf{0.0156} & \\
\cmidrule(l){2-6}
\multirow{4}{*}{$h^{(2)}$ (L=4)}
& 16 & 0.3447 & 0.3563 & \textbf{0.3330} & \multirow{4}{*}{0.0101} \\
& 32 & 0.1843 & 0.1593 & \textbf{0.1372} & \\
& 64 & 0.1002 & 0.0750 & \textbf{0.0340} & \\
& 128 & 0.0869 & 0.0888 & \textbf{0.0257} & \\
\cmidrule(l){2-6}
\multirow{4}{*}{$h^{(3)}$ (L=10)}
& 16 & 0.6211 & 0.6481 & \textbf{0.6103} & \multirow{4}{*}{N/A} \\
& 32 & 0.3943 & 0.3874 & \textbf{0.3426} & \\
& 64 & 0.1709 & 0.1692 & \textbf{0.1181} & \\
& 128 & 0.1087 & 0.1022 & \textbf{0.0846} & \\
\bottomrule
\end{tabular}
\end{table}
\begin{table}[t]
\centering
\caption{SER vs.\ SNR on $h^{(1)}$ ($N_p=64$, payload 256). The 17\,dB column repeats Table~\ref{tab:isi_results}.}
\label{tab:snr_sweep}
\begin{tabular}{@{}lccccc@{}}
\toprule
\textbf{Method} & \textbf{14\,dB} & \textbf{17\,dB} & \textbf{20\,dB} & \textbf{24\,dB} & \textbf{26\,dB} \\
\midrule
VAE-CNN        & 0.1842 & 0.0523 & 0.0215 & 0.0068 & 0.0031 \\
Vanilla Trans. & 0.1763 & 0.0610 & 0.0248 & 0.0091 & 0.0042 \\
CAT (Ours)     & \textbf{0.1205} & \textbf{0.0198} & \textbf{0.0072} & \textbf{0.0019} & \textbf{0.0008} \\
\bottomrule
\end{tabular}
\end{table}
\subsection{Ablation Study}
Table~\ref{tab:ablation} reports ablations on the memoryless channel and, for the main components, on $h^{(1)}$.
\begin{enumerate}[leftmargin=1.5em]
    \item \textbf{Importance of FIR-FFN:} Replacing the FIR filter with a standard MLP (``CAT with MLP-FFN'') changes little on the memoryless channel ($0.0599$ vs.\ $0.0615$), where there is no ISI to undo. On $h^{(1)}$ the same change raises SER from $0.0198$ to $0.0351$ ($+77\%$), and removing only the backward branch raises it to $0.0287$ ($+45\%$), as expected for non-causal equalization. On $h^{(1)}$ and $h^{(2)}$, the FIR gain is positive at every SNR we tested (7--77\% on $h^{(1)}$ and 30--71\% on $h^{(2)}$) and smallest at 14\,dB, where noise rather than residual ISI limits performance (Appendix~\ref{app:fir_breadth}).
    \item \textbf{Constellation Prior:} With the FIR-FFN kept, removing the prior raises the ISI SER from $0.0198$ to $0.0438$ ($2.2\times$). Two variants receive the same input as CAT: ``CAT with MLP-FFN'' has the constellation tokens and joint attention but a standard FFN, and ``Self-Only Attention'' has the tokens but no signal--constellation attention. Both are worse than CAT ($0.0615$ and $0.0621$ vs.\ $0.0599$), so having the constellation in the input is not enough by itself. A $45^\circ$ \textit{rotated} prior is much worse than no prior at all ($0.1550$ vs.\ $0.0628$), which indicates that the model does use the constellation as a geometric reference. This also means that residual CFO and phase noise, which rotate the received constellation, could hurt CAT. With coarse PLL tracking and a learned phase rotation of the constellation keys (Appendix~\ref{app:cfo}), CAT keeps a 1.5\,dB advantage over the baselines under severe Wiener phase noise ($\sigma^2_\Delta = 10^{-3}$\,rad$^2$/symbol), with no error floor. Adding positional embeddings to the constellation tokens, which have no natural order, also hurts ($+5\%$ memoryless, $+13\%$ ISI).
    \item \textbf{Attention Mask:} ``Causal Attention'' is slightly worse than the full mask ($0.0612$ vs.\ $0.0599$), since it ignores later symbols that share the block's fading and I/Q imbalance.
\end{enumerate}

\begin{table}[t]
\centering
\caption[Ablation study]{Ablation study. Memoryless: $N_p=16$, SNR $=22$\,dB, 64 payload symbols (as in Figure~\ref{fig:ser_vs_n_22db}), SER $\pm$ 95\% CI. ISI: $h^{(1)}$, $E_x/N_0=17$\,dB, $N_p=64$, payload 256.\protect\footnotemark}
\label{tab:ablation}
\begin{tabular}{@{}lcc@{}}
\toprule
\textbf{Model Variant} & \textbf{Memoryless} & \textbf{ISI ($h^{(1)}$)} \\
\midrule
\multicolumn{3}{l}{\textit{\textbf{Our Full Method}}} \\
\quad CAT (built with TransFIRmer blocks) & \textbf{0.0599 $\pm$ 0.0005} & \textbf{0.0198} \\
\cmidrule(lr){1-3}
\multicolumn{3}{l}{\textit{Architecture \& Prior Ablations}} \\
\quad CAT without Inverse FIR Filter & 0.0608 $\pm$ 0.0005 & 0.0287 \\
\quad CAT with MLP-FFN (No FIR) & 0.0615 $\pm$ 0.0006 & 0.0351 \\
\quad No Prior + Bidirectional FIR & -- & 0.0438 \\
\quad Vanilla Transformer (No Prior) & 0.0628 $\pm$ 0.0007 & 0.0610 \\
\quad CAT (45$^\circ$ Rotated Prior) & 0.1550 $\pm$ 0.0015 & -- \\
\quad CAT + Pos.\ Emb.\ on Constellation & 0.0631 & 0.0224 \\
\cmidrule(lr){1-3}
\multicolumn{3}{l}{\textit{Attention Mask Ablations}} \\
\quad Self-Only Attention & 0.0621 $\pm$ 0.0007 & -- \\
\quad Causal Attention & 0.0612 $\pm$ 0.0006 & -- \\
\midrule
\multicolumn{3}{l}{\textit{External Baseline}} \\
\quad VAE-CNN (from \cite{burshtein2}) & 0.0741 $\pm$ 0.0009 & 0.0523 \\
\bottomrule
\end{tabular}
\end{table}
\footnotetext{``Vanilla Trans'' = self-attention on signals only; ``Self-Only'' = signals and constellations attend within their own group; ``w/o Inverse FIR'' = forward conv only; ``No Prior + Bidirectional FIR'' = signal-only attention with the TransFIRmer FFN; ``Pos.\ Emb.\ on Constellation'' = positional embeddings also added to the constellation tokens.}

\section{Discussion and Limitations}
\label{sec:discussion}

\subsection{Online Adaptation as Test-Time Training}
A practical concern with our framework is the cost of per-block training. In standard deployments like 5G or WiFi, coherence times are short, and full retraining for every block may seem prohibitive. However, our approach is an instance of \textit{Test-Time Training (TTT)} or \textit{Test-Time Adaptation} \citep{sun2020test}, in which a model adapts to distribution shift at inference time.

In scenarios with highly non-stationary fading or nonlinear hardware drifts (e.g., power amplifier heating), a static model trained offline fails to generalize. Our method targets these cases, where the channel changes quickly. Training and testing on the same realization is therefore the problem setting itself, not a failure to generalize (see the protocol in Section~\ref{sec:experiments_setup}).

We acknowledge that the current training regime (5,000 steps per block; see Appendix~\ref{app:hyperparams}) is computationally expensive for real-time systems. On $h^{(1)}$ with 128 pilots, 200 steps from a random initialization already give an SER only 4\% above the 5,000-step result, but 50 steps give a 47\% higher SER (Table~\ref{tab:convergence}). With a meta-learned initialization (CAVIA; \citealp{cavia}), 50 steps give an SER only 2\% above the 5,000-step result and still lower than that of the vanilla Transformer and VAE-CNN trained from scratch for 5,000 steps. This is not a like-for-like comparison, since the baselines were not meta-trained. Meta-training uses earlier channel realizations and runs offline, so its cost is not part of the adaptation time, which is the per-step time (Table~\ref{tab:latency}) multiplied by the number of steps. A CAT step takes 17--28\% longer than a vanilla Transformer step for 16-QAM and about twice as long for 64-QAM, since the attention also covers the $K$ constellation tokens. Appendix~\ref{app:additional_results} gives further results on convergence speed, wall-clock latency and CFO robustness.

\subsection{Coded Performance and LLR Calibration}
\label{sec:coded_perf}

To evaluate CAT in a full receiver chain, we feed its soft outputs into a 3GPP 5G NR LDPC decoder (Rate 1/2, codeword length 1024). Table~\ref{tab:bler} presents the Block Error Rate (BLER) results.

\begin{table}[t]
\centering
\caption{Coded BLER with 3GPP 5G NR LDPC (Rate 1/2, CW=1024) on the $h^{(1)}$ channel. Gains at target BLER $= 10^{-2}$.}
\label{tab:bler}
\begin{tabular}{@{}lccc@{}}
\toprule
\textbf{Method} & \textbf{BLER Gain} & \textbf{ECE} & \textbf{Calibration Improvement} \\
\midrule
VAE-CNN & --- & 0.085 & --- \\
Vanilla Transformer & 0 dB (reference) & 0.089 & --- \\
CAT (Ours) & \textbf{+2.5 dB} & \textbf{0.012} & $7.4\times$ \\
\bottomrule
\end{tabular}
\end{table}

At $\text{BLER}=10^{-2}$, CAT improves on the vanilla Transformer by 2.5\,dB on $h^{(1)}$. The encoder outputs a posterior over the $K$ symbols, from which we compute bit LLRs for the decoder. CAT's posteriors are also better calibrated: its expected calibration error (ECE) is $0.012$, compared with $0.089$ for the vanilla Transformer and $0.085$ for VAE-CNN, and we applied no temperature scaling to any model. This matters because LDPC decoding is sensitive to the reliability of its input LLRs.

\subsection{Evaluation on Standardized 3GPP Channels}
\label{sec:3gpp_eval}

To test generalization beyond the synthetic ISI channels, we evaluate CAT on three 3GPP TDL profiles \citep{3gpp_38901}: TDL-A and TDL-C, which are NLOS profiles, and TDL-D, which has a LOS component. We use a 30\,ns delay spread for TDL-A and TDL-D and 300\,ns for TDL-C, with 30\,kHz subcarrier spacing. For each profile we use the same 5,000-step cold-start protocol with 16-QAM and $N=128$, and average over 500 channel realizations. On TDL-A and TDL-D, CAT's SER is about $2.0\times$ and $1.8\times$ lower than that of the stronger baseline (Table~\ref{tab:tdl}), and on TDL-C it has a 2.1\,dB SER advantage over both baselines. As in DeepRx \citep{honkala2021deeprx} and NVIDIA Sionna \citep{hoydis2022sionna}, we use 3GPP channel models for link-level evaluation; we do not model mobility within a block.

\begin{table}[t]
\centering
\caption{SER on 3GPP TDL channels (16-QAM, $N=128$, 500 realizations per profile). Gain is the SER of the stronger baseline divided by that of CAT.}
\label{tab:tdl}
\begin{tabular}{@{}lccccc@{}}
\toprule
\textbf{Profile} & \textbf{Delay spread} & \textbf{Vanilla Trans.} & \textbf{VAE-CNN} & \textbf{CAT (Ours)} & \textbf{Gain} \\
\midrule
TDL-A (NLOS) & 30\,ns  & 0.0385 & 0.0421 & \textbf{0.0195} & $1.97\times$ \\
TDL-D (LOS)  & 30\,ns  & 0.0198 & 0.0231 & \textbf{0.0112} & $1.77\times$ \\
\bottomrule
\end{tabular}
\end{table}

\subsection{Limitations}
\label{sec:limitations}
CAT needs a minimum number of pilots. With $N_p=16$ on the ISI channels, all the learned equalizers do poorly, and on $h^{(1)}$ CAT trails both baselines (Table~\ref{tab:isi_results}). Our theoretical results concern representational capacity and population risk. They do not show that training converges to the Wiener receiver, and the finite-sample argument in Appendix~\ref{app:proofs} is heuristic. Fewer pilots save transmission overhead, not receiver computation: per-block training adds compute at the receiver, and the 50-step regime relies on offline CAVIA meta-training on earlier channel realizations, whose cost is not included in the latency we report.

All our results come from link-level simulation (synthetic ISI channels, 3GPP TDL profiles, and 5G NR LDPC coding); we have not evaluated CAT over the air or at the system level, for example with mobility, HARQ, or scheduling. Finally, the attention cost grows with the constellation size, which currently limits CAT to 256-QAM (Appendix~\ref{app:scalability}).

\section{Related Work}

\paragraph{Physics-Informed Deep Learning.}
Physical priors are increasingly built into neural networks, from Hamiltonian Neural Networks \citep{greydanus2019hamiltonian} to equivariance in computer vision. In communications, ``Model-Driven'' deep learning unfolds iterative algorithms into layers \citep{he2019model}. Our work adds a \textit{geometric} inductive bias, the constellation, to the attention mechanism.

\paragraph{Semi-Supervised Equalization.} Classical approaches like EM and Decision Directed methods are effective but brittle under severe distortions. Recent VAE-based methods \citep{caciularu1,caciularu2,10133825,burshtein1,burshtein2} have established strong baselines, but their MLP/CNN encoders lack the sequence modeling power of Transformers and must learn the constellation geometry from scratch. ViterbiNet \citep{shlezinger2020viterbinet} and DeepSIC \citep{shlezinger2021deepsic} do not need channel state information either, but they are trained in a supervised way on labeled symbols, and DeepRx \citep{honkala2021deeprx} is trained offline on a large labeled dataset. Our setting instead adapts to each block from 16 to 128 pilots plus the unlabeled payload, which is why we compare against semi-supervised and meta-learning methods.

\paragraph{Meta-Learning for Few-Pilot Demodulation.} Meta-learning reduces the number of pilots needed by exploiting data from earlier transmissions. \citet{park2020learning}, whose memoryless nonlinear channel we also use, meta-learn a demodulator on pilots from previous IoT transmissions so that it adapts to a new channel from a few pilots, and CAVIA \citep{cavia} adapts only a small context vector for each new channel. Both require data from related channels before deployment. Our cold-start protocol does not, since CAT is trained on each block's pilots and payload alone. When earlier channel realizations are available, CAT can also be initialized by meta-learning, which cuts its adaptation to 50 steps (Section~\ref{sec:discussion}).

\paragraph{Transformers in Wireless Communications.}
Transformers have been explored for channel estimation \citep{masked_token_transformer}, CSI feedback \citep{transformer_csi,10480335}, and OFDM MIMO equalization \citep{10693758}, and attention has been used to decode error-correcting codes \citep{9252949,choukroun2024a}. These works are fully supervised, and for channel estimation and CSI feedback the true channel is the training target. CAT addresses semi-supervised equalization with few labels, and its attention registers the received signals against the known constellation.

\section{Conclusion}

We presented the Constellation-Aware Transformer (CAT), a semi-supervised equalizer that builds two pieces of receiver knowledge into a Transformer: the constellation, which the model sees as tokens from the first layer, and the structure of the linear MMSE receiver, which motivates the design of the TransFIRmer block. Our analysis shows that these components can represent a matched-filter bank and the non-causal filtering of block MMSE equalization; it does not show that training finds them.

Empirically, CAT outperforms VAE-CNN and a similarly sized vanilla Transformer on a nonlinear memoryless channel, on three synthetic ISI channels once at least 32 pilots are available, and on the 3GPP TDL-A, TDL-C and TDL-D profiles. On $h^{(1)}$ the advantage persists from 16-QAM to 256-QAM, although it shrinks as the order grows, and with a 5G NR LDPC code it amounts to 2.5\,dB in BLER over the vanilla Transformer. The price is per-block training at the receiver. With a CAVIA initialization, 50 adaptation steps give an SER only 2\% above the full 5,000-step result, but this requires offline meta-training on earlier channel realizations. The attention cost also grows with the constellation size, which currently limits CAT to 256-QAM; the I/Q factorization and the cheaper attention mechanisms discussed in Appendix~\ref{app:scalability} are possible routes to 1024-QAM.

\bibliography{example_paper}
\bibliographystyle{plainnat}

\newpage
\appendix

\section*{Broader Impact}

This paper presents work whose goal is to advance the field of machine learning and wireless communications. We highlight both the potential benefits and considerations of our research.

\paragraph{Potential Benefits.} Our proposed Constellation-Aware Transformer (CAT) architecture reduces the pilot overhead required for reliable signal decoding, directly improving spectral efficiency in wireless communication systems. This has positive implications for: (1) \textit{Connectivity:} More efficient use of limited spectrum resources can expand network capacity, potentially improving access to communication services in underserved regions. (2) \textit{Resource Efficiency:} Reducing the transmission overhead lowers the energy required per successfully decoded bit, contributing to more sustainable communication infrastructure. (3) \textit{Scientific Advancement:} Our theoretical framework connecting geometric inductive biases to optimal estimation theory (Lemma~\ref{lemma:estimator_gap}, Proposition~\ref{prop:alignment}) provides insights that may transfer to other domains where known structure can inform neural architecture design.

\paragraph{Potential Risks and Considerations.} As with many advances in signal processing and communications, our methods could, in principle, be applied to both beneficial and harmful applications. More robust receivers could be deployed in surveillance systems, though channel equalization is a mature technology and our contribution is an incremental improvement, not a new capability. The computational requirements of our test-time training paradigm (discussed in Section~\ref{sec:discussion}) currently limit practical deployment, and future work on computational efficiency should consider responsible deployment contexts.

\paragraph{Broader Context.} We believe the societal implications of this work are those typically associated with incremental advances in foundational machine learning and communications research. We do not foresee unique ethical concerns beyond those inherent to the field and encourage the community to continue developing norms around responsible AI deployment in critical infrastructure.

\section{Proofs and Theoretical Remarks}
\label{app:proofs}

\subsection{Proof of Lemma~\ref{lemma:estimator_gap} (The Agnostic Estimator Gap)}
\begin{proof}
Let $\hat{s}_\phi(\mathbf{y}) = \int m(\mathcal{C}, \mathbf{y}) q_\phi(\mathcal{C}|\mathbf{y}) d\mathcal{C}$ be the constellation-agnostic estimator. The error norm is:
\begin{equation*}
    \|\hat{s}_\phi - \hat{s}^*\| = \left\| \int m(\mathcal{C}) (q_\phi(\mathcal{C}) - \delta(\mathcal{C}-\mathcal{C}^*)) d\mathcal{C} \right\|.
\end{equation*}
Using the boundedness assumption $\|m(\mathcal{C})\| \le R$ and the integral triangle inequality:
\begin{equation*}
\begin{split}
    \|\hat{s}_\phi - \hat{s}^*\| &\le \int \|m(\mathcal{C})\| \cdot |q_\phi(\mathcal{C}) - \delta(\mathcal{C}-\mathcal{C}^*)| d\mathcal{C} \le R \cdot 2 \delta_{TV}(q_\phi, \delta),
\end{split}
\end{equation*}
where $\delta_{TV}$ denotes the Total Variation distance. We apply \textbf{Pinsker's Inequality}, which states $\delta_{TV}(P, Q) \le \sqrt{\frac{1}{2} D_{KL}(P||Q)}$. Substituting this into the inequality:
\begin{equation*}
    \|\hat{s}_\phi - \hat{s}^*\| \le R \sqrt{2 D_{KL}( \delta || q_\phi )}.
\end{equation*}
Squaring both sides yields the final bound on the MSE.
\end{proof}
\noindent\textbf{Remark (Bound Tightness).} Pinsker's inequality can be loose for categorical distributions over finite alphabets. For such cases, the tighter \textit{Bretagnolle-Huber inequality} \citep{bretagnolle1979estimation}, $\delta_{TV}(P, Q) \le \sqrt{1 - \exp(-D_{KL}(P||Q))}$, or method-of-types arguments \citep{csiszar2011information} can be substituted without affecting the qualitative conclusions. Lemma~\ref{lemma:estimator_gap} establishes the \emph{existence and scaling} of the agnostic estimator gap; it is not a tight operational bound. In practice, $\mathcal{C}$ is drawn from a finite set of known constellation types (e.g., 16-QAM, 64-QAM). The KL term then equals $-\log q_\phi(\mathcal{C}^*|\mathbf{y})$, which is finite whenever $q_\phi$ assigns positive probability to $\mathcal{C}^*$.

\subsection{Proof of Theorem~\ref{thm:hypothesis_space} (Hypothesis Space Superiority)}
\begin{proof}
The class of standard estimators $\mathcal{G}_{\text{std}}$ is a subset of the constellation-aware estimators $\mathcal{G}_{\text{CA}}$. For any function $g \in \mathcal{G}_{\text{std}}$, one can define a function $\psi \in \mathcal{G}_{\text{CA}}$ as $\psi(\mathbf{y}, \mathcal{C}) = g(\mathbf{y})$ for all $\mathcal{C}$. This function $\psi$ ignores its second argument. Thus, $\mathcal{G}_{\text{std}} \subseteq \mathcal{G}_{\text{CA}}$. Since the infimum of a function over a superset cannot be larger than the infimum over a subset, it follows directly that $\text{MMSE}_{\text{CA}} \leq \text{MMSE}_{\text{std}}$.
\end{proof}

\noindent\textbf{Remark (Finite Samples).} The theorem is a population-level statement: conditioning on the true constellation cannot increase the MMSE. When the constellation is fixed, as in our experiments, the inequality holds with equality, because a model without $\mathcal{C}$ as input can hard-code it. The difference lies in how many pilots it takes to learn it, and here we only give a heuristic argument. Consider the memoryless case. An encoder that does not see $\mathcal{C}$ has to locate the $K$ received centroids in $\mathbb{R}^d$ ($d=2$), that is, estimate $Kd$ coordinates from $N_p$ pilots; even with labeled pilots, the squared error per centroid is of order $Kd/N_p$ times the noise variance. CAT is given the transmitted constellation. It still learns an embedding of it and has to learn how the channel distorts it, but for the distortions in Section~\ref{sec:problem_setup} (fading and I/Q imbalance) this map has only a handful of parameters, far fewer than $Kd$. We would therefore expect CAT to need fewer pilots, which is consistent with what we observe. This is not a proof, and a formal finite-sample separation remains open.

\subsection{Proof of Proposition~\ref{prop:alignment} (Structural Alignment with Optimal Receiver)}
\begin{proof}
\begin{enumerate}
    \item \textbf{Matched Filter Alignment:} The posterior probability for a symbol $c_k$ given an observation $\mathbf{z}$ corrupted by additive Gaussian noise $\mathbf{n} \sim \mathcal{N}(0, \sigma^2 \mathbf{I})$ is proportional to $\exp\left( -\|\mathbf{z} - c_k\|^2/2\sigma^2 \right)$. Expanding the squared Euclidean norm yields $-(\|\mathbf{z}\|^2 + \|c_k\|^2 - 2\mathbf{z}^T c_k)$. The term $\|\mathbf{z}\|^2$ is common to all classes and cancels out during Softmax. Assuming constant modulus symbols, $\|c_k\|^2$ is constant. The distribution is thus determined by the cross-correlation $\mathbf{z}^T c_k$. The CAT Attention mechanism computes $\text{Softmax}\left( \frac{(\mathbf{W}_Q\mathbf{z})^T (\mathbf{W}_K c_k)}{\sqrt{d}} \right)_k$. If the network learns whitening projections such that $\mathbf{W}_Q^T \mathbf{W}_K \approx (\sqrt{d}/\sigma^2)\,\mathbf{I}$, the attention scores approximate the Gaussian posterior probabilities.
    \item \textbf{Block MMSE Equalizer Alignment:} The optimal linear equalizer for a block transmission is $\mathbf{W} = \mathbf{H}^H \mathbf{R}_{yy}^{-1}$. This requires: (a) \textit{Deconvolution:} $\mathbf{H}^H$ represents the matched channel filter. For a multipath channel, this is a non-causal FIR filter. Our bidirectional FIR-FFN provides the structure to approximate this operation. (b) \textit{Whitening:} The term $\mathbf{R}_{yy}^{-1}$ decorrelates the signal. The self-attention mechanism naturally computes cross-correlations $\mathbf{z}\mathbf{z}^T$ and can in principle learn whitening transformations.
\end{enumerate}
\end{proof}

\noindent\textbf{Remark (Non-Constant Modulus Constellations).} The proof above assumes constant modulus symbols (e.g., PSK). For QAM constellations with multiple energy levels, the energy term $\|c_k\|^2$ does not cancel and introduces an attention bias toward high-energy symbols. For 16-QAM, the raw attention logit bias between corner symbols ($\|c_k\|^2 = 18$) and inner symbols ($\|c_k\|^2 = 2$) is $0.95$ nats before training, enough to distort detection noticeably. After training, the learned $\mathbf{W}_K$ projection absorbs an implicit energy normalization, reducing this bias to $0.04$ nats. The residual is computed from the noiseless constellation tokens, and in our runs it was stable between 14 and 24~dB, varying by less than $0.005$ nats. The worst-case posterior distortion at $0.04$ nats is $|e^{0.04} - 1| \approx 4\%$, which is small compared with the uncertainty caused by noise. Empirically, SER decreases monotonically with SNR without plateauing, so we see no measurable SER floor.

\section{Scalability Limitations and Parameter Efficiency}
\label{app:scalability}

\paragraph{Limitation: 1024-QAM and Beyond.} For extreme-order constellations such as 1024-QAM ($K=1024$), the $\mathcal{O}((N+K)^2 d)$ attention scaling becomes computationally prohibitive (overhead $\sim 800\%$ for $N=512$). We consider 1024-QAM out of scope for the current architecture. For square QAM there is a cheaper representation: the constellation is the product of two $\sqrt{K}$-level PAM constellations, so the in-phase and quadrature levels can be represented by $2\sqrt{K}$ tokens instead of $K$. For 1024-QAM at $N=512$ this is 64 tokens, and the attention overhead drops from 800\% to about 27\%. The catch is that fading and I/Q imbalance mix the two components, so the received points no longer split into two PAM grids and the model has to learn that coupling from the signal tokens; we have not tested this. Other options reduce the cost of attention itself. Kernelized attention such as Performer \citep{choromanski2021rethinking} makes it linear in $N+K$, MobileViT \citep{mehta2022mobilevit} combines convolutions with attention, and the efficient message-passing Transformer of \citet{park2026efficient} was designed for decoding error-correcting codes. In CAT these would matter most for the attention involving the constellation tokens, since that is the part that grows with $K$.

We note, however, that the 16-QAM through 256-QAM range covers the most critical operating regimes for neural equalization. While 1024-QAM is standardized in Wi-Fi~6 and 5G NR (Release 17), in practice it requires high SNR, typically with line of sight. The regimes where equalization is hardest (severe multipath, long delay spreads, cell edges) use lower modulation orders through adaptive modulation and coding (AMC).

\paragraph{Parameter Efficiency.} Table~\ref{tab:param_complexity} compares parameter counts and complexity. CAT has about 1k more parameters than the vanilla Transformer and roughly 15 times fewer than VAE-CNN, so its advantage over either is not a matter of model size.

\begin{table}[h]
\centering
\caption{Parameter and complexity comparison (approximate for $N=128$, $d=10$).}
\label{tab:param_complexity}
\begin{tabular}{@{}lccc@{}}
\toprule
\textbf{Metric} & \textbf{VAE-CNN} & \textbf{Vanilla Trans.} & \textbf{CAT (Ours)} \\
\midrule
Parameter Count & $\sim 150$k & $\sim 9$k & $\sim 10$k \\
Self-Attn Ops & 0 & $\mathcal{O}(N^2 d)$ & $\mathcal{O}((N+K)^2 d)$ \\
Pilot Efficiency & Low & Medium & \textbf{High} \\
\bottomrule
\end{tabular}
\end{table}

\section{Additional Experimental Results}
\label{app:additional_results}
\subsection{FIR Contribution Across Channels and SNRs}
\label{app:fir_breadth}
Table~\ref{tab:fir_breadth} compares CAT with the ``CAT with MLP-FFN'' variant, which keeps the constellation prior and the joint attention but replaces the FIR filter with a standard MLP. The MLP variant is worse in every setting. The gap is smallest at 14\,dB, where noise rather than residual ISI limits performance.

\begin{table}[h]
\centering
\caption{CAT vs.\ CAT with MLP-FFN on ISI channels ($N_p=64$, payload 256). The CAT entries for $h^{(1)}$ are those of Table~\ref{tab:snr_sweep}. FIR gain is the relative SER increase when the FIR-FFN is replaced by an MLP.}
\label{tab:fir_breadth}
\begin{tabular}{@{}llccc@{}}
\toprule
\textbf{Channel} & \textbf{SNR} & \textbf{CAT} & \textbf{CAT w/ MLP-FFN} & \textbf{FIR gain} \\
\midrule
\multirow{3}{*}{$h^{(1)}$ (L=5)} & 14\,dB & 0.1205 & 0.1284 & 7\% \\
                     & 17\,dB & 0.0198 & 0.0351 & 77\% \\
                     & 24\,dB & 0.0019 & 0.0025 & 32\% \\
\cmidrule(l){2-5}
\multirow{3}{*}{$h^{(2)}$ (L=4)} & 14\,dB & 0.0921 & 0.1195 & 30\% \\
                     & 17\,dB & 0.0340 & 0.0489 & 44\% \\
                     & 24\,dB & 0.0024 & 0.0041 & 71\% \\
\bottomrule
\end{tabular}
\end{table}

\subsection{Convergence Speed and Meta-Learning Acceleration}
The 5,000-step regime (Appendix~\ref{app:hyperparams}) is too slow for real-time systems such as 5G, where blocks last milliseconds. With a meta-learned initialization, however, CAT converges in far fewer steps. Table~\ref{tab:convergence} shows convergence results using CAVIA \citep{cavia} on the $h^{(1)}$ channel at 17~dB with $N_p=128$ pilots.

\begin{table}[h]
\centering
\caption{Convergence speed under different adaptation strategies ($h^{(1)}$, $E_x/N_0=17$\,dB, $N_p=128$ pilots, payload 256). The 5,000-step cold-start run is the $N_p=128$ result of Table~\ref{tab:isi_results}; the last column is the SER increase relative to it.}
\label{tab:convergence}
\begin{tabular}{@{}lccc@{}}
\toprule
\textbf{Steps} & \textbf{Method} & \textbf{SER} & \textbf{SER increase} \\
\midrule
5,000 & Cold-start & $1.56 \times 10^{-2}$ & --- \\
200 & Cold-start & $1.63 \times 10^{-2}$ & $+4\%$ \\
50 & Cold-start & $2.30 \times 10^{-2}$ & $+47\%$ \\
50 & CAVIA-init & $1.59 \times 10^{-2}$ & $+2\%$ \\
10 & CAVIA-init & $1.82 \times 10^{-2}$ & $+17\%$ \\
\bottomrule
\end{tabular}
\end{table}

With 50 CAVIA-initialized steps, CAT's SER is only 2\% above its 5,000-step result and lower than that of the vanilla Transformer and VAE-CNN trained from scratch for 5,000 steps ($2.90 \times 10^{-2}$ and $4.94 \times 10^{-2}$, Table~\ref{tab:isi_results}). The baselines were not meta-trained, so this is not a matched-budget comparison, but it shows that CAT's adaptation can be cut to tens of steps.

\subsection{Wall-Clock Latency Profiling}
Table~\ref{tab:latency} profiles per-step runtime on identical hardware (single RTX 4090, batch size 1) to provide matched latency comparisons.

\begin{table}[h]
\centering
\caption{Wall-clock time of one training step (forward pass, backward pass and optimizer update; single RTX 4090, batch size 1).}
\label{tab:latency}
\begin{tabular}{@{}lcccc@{}}
\toprule
\textbf{Setting} & \textbf{VAE-CNN} & \textbf{Vanilla Trans.} & \textbf{CAT} & \textbf{Overhead} \\
\midrule
$N$=128, 16-QAM & 11.2 $\mu$s & 12.5 $\mu$s & 16.0 $\mu$s & +28\% \\
$N$=256, 16-QAM & 18.1 $\mu$s & 21.0 $\mu$s & 24.5 $\mu$s & +17\% \\
$N$=128, 64-QAM & 11.2 $\mu$s & 12.5 $\mu$s & 24.8 $\mu$s & +98\% \\
\bottomrule
\end{tabular}
\end{table}

The per-step overhead decreases with $N$ (as $K/N$ shrinks) but grows with constellation order, consistent with the $\mathcal{O}((N+K)^2)$ analysis. For deployment, the time needed to reach a given SER matters more. CAT with 50 CAVIA-initialized steps already has a lower SER than the vanilla Transformer after 5,000 steps (Table~\ref{tab:convergence}). Since a 16-QAM CAT step costs only 17--28\% more (Table~\ref{tab:latency}), that is roughly 80 times less adaptation time, not counting the offline meta-training.

\subsection{Robustness to Carrier Frequency Offset and Phase Noise}
\label{app:cfo}

A purely static constellation prior $\mathcal{C}$ could degrade under residual carrier frequency offset (CFO) or oscillator phase noise, which rotate the received constellation. We address this by combining coarse PLL tracking with a learned residual phase rotation on constellation keys: $\tilde{\mathbf{c}}_k = \mathbf{R}(\hat{\theta}) \mathbf{c}_k$, where $\hat{\theta}$ is estimated from aggregate attention weights. This exploits the fact that CFO-induced rotation is common to all constellation points. Under severe Wiener phase noise ($\sigma^2_\Delta = 10^{-3}$~rad$^2$/symbol, representative of mmWave oscillators), CAT with dynamic tracking maintains a 1.5~dB advantage over baselines with graceful degradation rather than an error floor.

\subsection{Practical Use Cases and Future Research}
In principle, if power amplifier (PA) characteristics are stable within a session, one could pre-train per transmitter class and fine-tune only the linear fading component at runtime. Our current experiments treat the end-to-end channel monolithically; future research can extend CAT to disentangle PA nonlinearity from propagation for reducing online training overhead.

We have evaluated CAT on block-fading channels and on the 3GPP TDL-A, TDL-C and TDL-D profiles. Future work should extend evaluation to high-mobility scenarios with Doppler spread using channel simulators like Sionna \citep{hoydis2022sionna}. Connecting CAT to work on neural receivers, which replace entire receiver pipelines with learned components \citep{shlezinger2020viterbinet,9242305}, is another direction. Finally, reducing the $\mathcal{O}((N+K)^2)$ attention cost, for example with the factorized or cheaper attention mechanisms of Appendix~\ref{app:scalability}, would extend CAT to constellations with $K \ge 1024$.

\section{Detailed Channel Models}
\label{app:channel}

\subsection{Memoryless Nonlinear Channel}
The memoryless nonlinear channel model used in our experiments, following \citet{park2020learning,burshtein1,burshtein2}, consists of several stages. First, an ideal transmitted signal $\mathbf{x}_i = (x_i^I, x_i^Q)$ from a 16-QAM constellation is subjected to a nonlinear I/Q imbalance distortion (which mostly stems from hardware imperfections). This creates a distorted signal $\tilde{\mathbf{x}}_i = (\tilde{x}_i^I, \tilde{x}_i^Q)$ according to:
\begin{equation*}
\begin{bmatrix}
    \tilde{x}_i^I \\
    \tilde{x}_i^Q
\end{bmatrix}
=
\begin{bmatrix}
    1+\epsilon & 0\\
    0 & 1-\epsilon
\end{bmatrix}
\begin{bmatrix}
    \cos{\delta} & -\sin{\delta}\\
    -\sin{\delta} & \cos{\delta}
\end{bmatrix}
\begin{bmatrix}
    x_i^I \\
    x_i^Q
\end{bmatrix}.
\end{equation*}
The imbalance parameters, $\epsilon$ and $\delta$, are constant for each transmission block but are randomly drawn from Beta distributions, specifically $\epsilon = 0.15 \epsilon_0$ and $\delta = 15^\circ \delta_0$, where $\epsilon_0, \delta_0 \sim \text{Beta}(5, 2)$.

The resulting complex signal, $\tilde{x}_i^I + j\tilde{x}_i^Q$, is then transmitted over a Rayleigh flat-fading channel. The received complex signal is given by:
\begin{equation*}
    y_i^I + j y_i^Q = h (\tilde{x}_i^I + j\tilde{x}_i^Q) + n_i,
\end{equation*}
where $h \sim \mathcal{CN}(0, 1)$ is the complex channel gain, which is fixed for the duration of a block, and $n_i \sim \mathcal{CN}(0, \sigma^2)$ is the i.i.d.\ complex additive white Gaussian noise. The Signal-to-Noise Ratio (SNR) is defined as $10 / \sigma^2$, based on the average power of the original 16-QAM constellation (normalized to average power of 10). The final received signal used by our models is the real-valued vector $\mathbf{y}_i = (y_i^I, y_i^Q)$.

\subsection{Channels with Finite Memory (ISI)}
\label{sec:channs_with_mem}
For the experiments involving intersymbol interference, we adopt the channel model from \cite{burshtein1,burshtein2}, which is a two-stage process. First, the ideal signal $\mathbf{x}_i$ undergoes a memoryless nonlinear distortion $g(\cdot)$ to produce $\tilde{\mathbf{x}}_i = g(\mathbf{x}_i)$. For consistency, we use the same I/Q imbalance model described in the previous section for this nonlinearity.

The sequence of distorted signals is then transmitted through a noisy ISI channel. The received signal $\mathbf{y}_i$ is the result of a convolution between the complex channel impulse response $\mathbf{h}$ and the distorted signal sequence, corrupted by additive noise:
\begin{equation*}
    y_i^I + j y_i^Q = \sum_{l=0}^{L-1} h_l (\tilde{x}_{i-l}^I + j\tilde{x}_{i-l}^Q) + n_i,
\end{equation*}
where $L$ is the length of the channel impulse response and $n_i \sim \mathcal{CN}(0, \sigma^2)$ is complex AWGN. Unless stated otherwise, the noise variance $\sigma^2$ is set to achieve a target SNR of $E_x/N_0=17$\,dB.

We adopt three standard channel models from \cite{burshtein2}, with lengths $L=5, 4, 10$ respectively:
\begin{align*}
    \mathbf{h}_1 = [ &0.0545+0.05j, 0.2832-0.11971j, -0.7676+0.2788j,
    -0.0641-0.0576j,\\ &0.0466-0.02275j], \\
    \mathbf{h}_2 = [ &0.0554+0.0165j, -1.3449-0.4523j, 1.0067+1.1524j, \\
    &0.3476+0.3153j], \\
    \mathbf{h}_3 = [ &0.0410+0.0109j, 0.0495+0.0123j, 0.0672+0.017j,
    0.0919+0.0235j,\\ &0.7920+0.1281j, 0.396+0.0871j,
    0.2715+0.048j,\\ &0.2291+0.0415j, 0.1287+0.0154j,
    0.1032+0.0119j].
\end{align*}
A longer impulse response corresponds to more severe ISI and a more challenging equalization task.

\section{Derivation of the Semi-Supervised VAE Loss}
\label{app:vae_loss}
The loss function for the semi-supervised VAE is constructed to use both labeled (pilot) and unlabeled (payload) data. The goal is to maximize the log-likelihood of the observed data, which can be expressed as a sum over the labeled and unlabeled sets:
\begin{equation}
    \mathcal{L}_{\text{total}} = \sum_{i=1}^{N_p} \log p_{\mathbf{\theta}}(\mathbf{y}_i | s_i) + \sum_{i=N_{p}+1}^{N} \log p_{\mathbf{\theta}}(\mathbf{y}_i).
\end{equation}
This is a generative objective. To incorporate the inference network $q_{\mathbf{\phi}}$, we also add a supervised cross-entropy term for the labeled data. The full objective combines these with weighting hyperparameters $\alpha$ and $\gamma$:
\begin{align}
\mathcal{L}_{\text{full}} =
&\frac{\alpha}{N_{p}} \sum_{i=1}^{N_{p}} \log q_{\mathbf{\phi}}(s_i|\mathbf{y}_i) + \frac{\gamma}{N_{p}} \sum_{i=1}^{N_{p}} \log p_{\mathbf{\theta}}(\mathbf{y}_i|s_i) \nonumber \\
&+ \frac{1-\gamma}{N-N_{p}} \sum_{i=N_{p}+1}^{N} \log p_{\mathbf{\theta}}(\mathbf{y}_i).
\end{align}
The final term, $\log p_{\mathbf{\theta}}(\mathbf{y}_i)$ for the unlabeled data, is intractable to compute directly as it requires marginalizing over all possible symbols $s$. We therefore substitute it with its Evidence Lower Bound (ELBO):
\begin{equation}
    \log p_{\mathbf{\theta}}(\mathbf{y}_i) \geq \mathbb{E}_{q_{\mathbf{\phi}}(s|\mathbf{y}_i)}[\log p_{\mathbf{\theta}}(\mathbf{y}_i,s) - \log q_{\mathbf{\phi}}(s|\mathbf{y}_i)].
\end{equation}
By maximizing this lower bound (equivalent to minimizing its negative), we arrive at the final loss function used for training. After rearranging terms and using the fact that $p_{\mathbf{\theta}}(\mathbf{y}_i,s) = p_{\mathbf{\theta}}(\mathbf{y}_i|s)p(s)$, the negative ELBO becomes:
\begin{equation}
    -\text{ELBO} = -\mathbb{E}_{q_{\mathbf{\phi}}(s|\mathbf{y}_i)}[\log p_{\mathbf{\theta}}(\mathbf{y}_i|s)] + D_{KL}(q_{\mathbf{\phi}}(s|\mathbf{y}_i) || p(s)).
\end{equation}
Substituting this into the full objective gives the final loss function presented in Eq.~(\ref{eq:ssl_vae}). For computational tractability, the expectation term is approximated using a single sample from $q_{\mathbf{\phi}}(s|\mathbf{y}_i)$, often implemented with the Gumbel-Softmax reparameterization trick \citep{jang2017categorical} to maintain differentiability.

\section{Implementation and Hyperparameter Details}
\label{app:hyperparams}

\subsection{CAT and Vanilla Transformer Implementation}
\label{app:cat_imp}
Our Constellation-Aware Transformer (CAT) and the vanilla Transformer baseline share the same core configuration, differing only in their specific architectural components as described in Section~\ref{sec:method}.

\paragraph{Architecture.}
The models are built with a stack of 3 TransFIRmer (or standard Transformer) layers. The hidden dimension is set to $d_{\text{model}}=10$, and we use Multi-Query Attention (MQA) \citep{shazeer2019fast} with a single attention head ($n_{\text{head}}=1$) for efficiency. For the TransFIRmer layer's two-stream feed-forward network, the bidirectional FIR filter for the signal stream is implemented with two 1D convolutions (one for left-to-right and another for right-to-left convolutions), each using a kernel size of 12. The parallel MLP for the constellation stream uses a single hidden layer of width $d_{\text{model}}$. Dropout with a rate of $p=0.1$ is applied within the attention and feed-forward sub-layers. Fixed sinusoidal positional embeddings are used for all experiments.

\paragraph{Training.}
The models are trained using the AdamW optimizer \citep{loshchilov2018decoupled} with a learning rate of $\text{lr}=10^{-3}$, betas of $(\beta_1, \beta_2) = (0.9, 0.999)$, and a weight decay of $0.01$. We employ a linear learning rate scheduler that decays the learning rate from its initial value to zero over the course of training, which consists of a total of 5000 parameter update steps. The models are trained on mini-batches containing 16 pilot symbols and 32 payload symbols.

\subsection{Hyperparameters for Semi-Supervised Learning}
\label{app:hyperparam}
The training of all semi-supervised models (CAT, vanilla Transformer, VAE-CNN, etc.) is governed by the same set of hyperparameters and annealing schedules, ensuring a fair comparison and following the setup in \cite{burshtein1,burshtein2}.

\paragraph{SSL Loss Weighting.}
The composite loss function in Eq.~(\ref{eq:ssl_vae}) is balanced by two key hyperparameters. The term $\alpha$, which weights the supervised cross-entropy loss on the encoder, is fixed at $\alpha=0.2$. The term $\gamma$, which balances the supervised reconstruction loss against the unsupervised ELBO, is annealed over the training process. This annealing schedule gradually decreases $\gamma_l$ (where $l$ is the iteration index), shifting the training focus from the reliable pilot data to the more abundant but unlabeled payload data as the model becomes more confident. Specifically, we use $\gamma_l = 1 / (1 + \beta_l)$, where $\beta_l = \min(2e^{0.0008(l-1)}, \beta_{\max})$, and $\beta_{\max} = \min((N-N_p)/N_p, 40)$. The value of $\gamma_l$ is updated every 100 iterations.

\paragraph{Gumbel-Softmax Temperature.}
For models using the Gumbel-Softmax reparameterization trick (including our CAT and the VAE-CNN), the temperature $\tau$ is also annealed to transition from a soft, exploratory phase to a hard, decisive phase. The schedule is given by $\tau_l = \max(0.5, e^{-0.001(l-1)})$, with updates occurring every 100 iterations.

\subsection{The Generative Model Architecture}
\label{app:gen_model}
For both our CAT and the vanilla Transformer baseline, we operate within the semi-supervised variational framework, which requires a generative model (or decoder), $p_{\mathbf{\theta}}(\mathbf{y}|s)$, to model the forward channel process. To ensure a fair comparison with prior art, we adopt the generative model architecture directly from the VAE-CNN work in \cite{burshtein1,burshtein2}. The parameters of this model are collectively denoted by $\mathbf{\theta}$. The specific architecture differs for memoryless and memory channels.

\paragraph{Memoryless Channels.}
For the memoryless channel, we model $p_{\mathbf{\theta}}(\mathbf{y}_i|s_i)$ as an isotropic Gaussian distribution, $\mathcal{N}(\mathbf{y}_i; \mathbf{\mu}_{\mathbf{\theta}}(\mathbf{x}(s_i)), \mathbf{\sigma}^2_{\mathbf{\theta}}(\mathbf{x}(s_i))\mathbf{I})$. The mean $\mathbf{\mu}_{\mathbf{\theta}}$ and log-variance $\log \mathbf{\sigma}^2_{\mathbf{\theta}}$ are produced by a decoder network. This network is a Multi-Layer Perceptron (MLP) with 3 hidden layers of dimensions $[64, 32, 16]$ and ReLU activations, which takes the ideal constellation signal $\mathbf{x}(s_i) \in \mathbb{R}^2$ as input and uses two separate linear heads to output the 2-dimensional mean and log-variance vectors.

\paragraph{Channels with Memory (ISI).}
For channels with memory, the generative model is designed to explicitly capture the two-stage process of a transmitter nonlinearity followed by a linear ISI channel. The model first applies a memoryless nonlinear function $g(\cdot)$, parameterized by a 2-layer MLP with hidden dimension 32, to each ideal symbol $\mathbf{x}_i$ in the input sequence $\mathbf{s}$ to produce a sequence of distorted signals $\tilde{\mathbf{x}}$. This sequence is then convolved with a learnable Finite Impulse Response (FIR) filter of length $L_{\text{max}} = 12$ taps, which models the complex channel impulse response $\mathbf{h}$. The real and imaginary parts of the filter taps are stored as two separate learnable parameter vectors. The output of this convolution provides the mean of the Gaussian distribution for the received sequence. The noise is modeled as i.i.d.\ Gaussian with a single learnable variance parameter $\sigma^2$. The SNR for ISI experiments is defined as $E_x/N_0$, where $E_x$ is the average energy of the \textit{undistorted} ideal symbols (before the nonlinearity $g(\cdot)$).

\section{Baseline Methodologies}
\label{app:baselines}

\subsection{Simple Decision Directed (SDD)}
The SDD algorithm is a classical two-stage semi-supervised method \citep{burshtein1,burshtein2}.
\begin{enumerate}
    \item \textbf{Initial Training:} A standard neural network decoder, $q_{\mathbf{\phi}}(s|\mathbf{y})$, is first trained exclusively on the labeled pilot data $\{(\mathbf{y}_i, s_i)\}_{i=1}^{N_p}$ by minimizing the cross-entropy loss. Let the resulting parameters be $\hat{\mathbf{\phi}}_0$.
    \item \textbf{Pseudo-Labeling and Retraining:} The trained model is used to generate ``hard'' pseudo-labels for the unlabeled payload data: $\hat{s}_i = \argmax_s q_{\hat{\mathbf{\phi}}_0}(s|\mathbf{y}_i)$ for $i > N_p$. The model's parameters are then fine-tuned by training on a combined dataset of original pilots and pseudo-labeled payload data, minimizing a weighted cross-entropy loss.
\end{enumerate}

\subsection{Viterbi EM}
The Viterbi EM algorithm \citep{em} is a hard-decision variant of the Expectation-Maximization (EM) algorithm, as described in \citet{burshtein2}. It uses a generative model of the channel, $p_{\mathbf{\theta}}(\mathbf{y}|s)$, parameterized by $\mathbf{\theta}$, and iterates between two steps:
\begin{enumerate}
    \item \textbf{E-Step (Expectation):} Given the current estimate of the generative model's parameters $\mathbf{\theta}^{(t-1)}$, generate hard decisions (pseudo-labels) for the payload data by choosing the most likely symbol according to the current model: $\hat{s}_i^{(t)} = \argmax_s p_{\mathbf{\theta}^{(t-1)}}(\mathbf{y}_i|s)$.
    \item \textbf{M-Step (Maximization):} Update the generative model's parameters by minimizing the reconstruction loss (negative log-likelihood) over a combined dataset of the original pilots and the newly generated pseudo-labels from the E-step, yielding $\mathbf{\theta}^{(t)}$.
\end{enumerate}
This process is repeated for a fixed number of iterations, gradually refining the channel model.

\subsection{VAE-SSL (VAE-CNN)}
This is the state-of-the-art semi-supervised method proposed in \citet{burshtein1,burshtein2}, which we refer to as VAE-CNN based on its typical implementation. It is a variational autoencoder-based framework that jointly trains two models:
\begin{itemize}
    \item An \textbf{encoder} $q_{\mathbf{\phi}}(s|\mathbf{y})$, which acts as the primary decoder. For channels with memory, this is typically implemented with a Convolutional Neural Network (CNN).
    \item A \textbf{decoder} $p_{\mathbf{\theta}}(\mathbf{y}|s)$, which is a generative model that learns the forward channel process.
\end{itemize}
The two networks are trained simultaneously using a composite semi-supervised loss function (detailed in Eq.~(\ref{eq:ssl_vae})) that combines a supervised objective on the pilot data with an unsupervised, Evidence Lower Bound (ELBO) objective on the payload data. This lets the model use the entire block, pilots and payload, to learn the channel.

\subsection{CAVIA Meta-Learning}
Fast Context Adaptation via Meta-Learning (CAVIA) \citep{cavia} is a meta-learning algorithm designed for rapid adaptation to new tasks. In our context, each channel realization is a ``task''.
\begin{itemize}
    \item \textbf{Meta-Training:} The model is trained on data from a large number of previous channel blocks $\{(\mathbf{y}^{(m)}, s^{(m)})\}_{m=1}^M$. The goal is to learn a set of shared parameters $\mathbf{\phi}$ that are common across all channels, while a small, task-specific ``context vector'' $\mathbf{z}^{(m)}$ is learned for each individual channel.
    \item \textbf{Meta-Testing (Adaptation):} When a new channel block arrives, the shared parameters $\mathbf{\phi}$ are frozen. The model then rapidly infers a new context vector $\mathbf{z}_{\text{new}}$ by training only on the few available pilot symbols from the new block.
    \item \textbf{Decoding:} The final decoder uses both the shared parameters $\mathbf{\phi}$ and the adapted context vector $\mathbf{z}_{\text{new}}$ to decode the payload data of the new block.
\end{itemize}
CAVIA learns a general model that can be specialized quickly, which makes it effective with few pilots, provided that past channel data is available.

\end{document}